%% file: main.tex
\documentclass[twoside,11pt]{article}
\usepackage{jmlr2e}
\usepackage[utf8]{inputenc}
\usepackage[T1]{fontenc}
\usepackage{url}
\usepackage{booktabs}
\usepackage{amsfonts}
\usepackage{nicefrac}
\usepackage[expansion=false]{microtype}
\usepackage{xcolor}
\usepackage{xspace}
\input{preamble}
\input{macros}
\usepackage{cleveref}

\begin{document}
\firstpageno{1}
\title{Conformal Cascade: Distribution-Free Accuracy Guarantees for Multi-Tier LLM Inference}

\author{%
\name Yifan Dou \email yd25b@fsu.edu \\
\addr Department of Computer Science, Florida State University
\AND
\name Shikan Lian \email shikanlian@gmail.com 
\AND
\name Shibo Li \email shiboli@cs.fsu.edu \\
\addr 
Department of Computer Science, Florida State University
}

\begingroup
\renewcommand\thefootnote{\fnsymbol{footnote}}
\maketitle
\endgroup

\input{sections/abstract}
\input{sections/introduction}
\input{sections/background}
\input{sections/method}
\input{sections/related_work}
\input{sections/experiments}

\input{sections/discussion}

\input{sections/conclusion}

\newpage
\bibliography{references}
\newpage
\appendix
\input{sections/appendix}
\end{document}

%% file: preamble.tex
\usepackage{amsmath}
\usepackage{amssymb}
\usepackage{bm}

\usepackage{algorithm}
\usepackage{algpseudocode}

\usepackage{graphicx}

\usepackage{multirow}
\usepackage{makecell}
\usepackage{pifont}
\usepackage[table]{xcolor}

\graphicspath{{figures/}}

%% file: macros.tex
\newtheorem{assumption}[theorem]{Assumption}
\newcommand{\Cal}{\mathcal{C}}
\newcommand{\Dcal}{\mathcal{D}_{\mathrm{cal}}}
\newcommand{\Acal}{\mathcal{A}}
\newcommand{\Xcal}{\mathcal{X}}
\newcommand{\Ycal}{\mathcal{Y}}
\newcommand{\Prob}{\mathrm{Pr}}
\newcommand{\expect}{\mathbb{E}}
\newcommand{\ytrue}{y_{\mathrm{true}}}
\newcommand{\indic}{\mathbb{1}}

%% file: sections/abstract.tex
\begin{abstract}
Large language model (LLM) cascades reduce inference cost by routing easy queries to a small model and deferring hard queries to a larger one. Production cascades govern this deferral through a confidence threshold, but LLM confidence scores are miscalibrated, the threshold must be tuned per model pair and per domain, and no setting yields a formal bound on cascade accuracy. We introduce \textbf{Conformal Cascade} (CC), a multi-tier inference framework that uses conformal prediction set size as the deferral rule: accept when the calibrated set collapses to a single answer, defer otherwise. The procedure delivers a distribution-free, finite-sample accuracy guarantee. By a per-tier union bound, the prediction set at the accepting tier covers the correct answer with probability at least $1 - K\alpha$ for any user-specified $\alpha$; under a selection-preservation condition (consistent with, but not strictly implied by, our marginal coverage results), the bound tightens to $1 - \alpha$. We further characterise expected cascade cost as an explicit function of $\alpha$ and the calibration-set acceptance rate. Across 18 multiple-choice benchmarks spanning science, medicine, commonsense, and standardized exams, evaluated on two-tier cascades drawn from four open-weight model families, CC strictly improves over the strongest calibration-tuned heuristic cascade on the majority of family--benchmark pairs, with the largest gains on reasoning-heavy benchmarks where majority vote is unreliable; on easier benchmarks the cascade commits the vast majority of queries to the small model at no accuracy cost. Extension to open-ended generation requires an answer-clustering step that we leave for future work. The method requires no model training and only black-box API access.
\end{abstract}

%% file: sections/introduction.tex
\section{Introduction}
\label{sec:intro}

Deploying large language models at scale forces a sharp trade-off between answer quality and inference cost. Serving a frontier model on every query is often prohibitive: at current token prices, inference dominates the operating expense of any high-traffic product, and each expensive call consumes accelerator time that could serve simpler queries cheaply elsewhere. LLM cascades offer the dominant production response: a query runs first on a small model, and escalates to a larger one only when needed \citep{chen2023frugalgpt,ong2025routellm,madaan2024automix}. What determines whether a cascade works is the \emph{deferral rule}: at each tier, the decision to return the current answer or hand the query to the next model. Production cascades implement this decision as a threshold on a confidence signal such as softmax probability \citep{ong2025routellm}, self-consistency agreement \citep{wang2023selfconsistency,valkanas2025c3po}, or a learned router score \citep{shnitzer2023routing}: accept if the signal exceeds a cutoff $\theta$, defer otherwise.

Confidence-threshold deferral has a well-documented calibration problem. LLMs are overconfident in ways that depend on domain and input format, sharpest on the reasoning-heavy queries that benefit most from escalation \citep{kadavath2022languagemodels,xiong2024uncertainty}, and no choice of $\theta$ separates confidently-wrong inputs from genuinely easy ones. The deeper issue is that the cascade's error rate becomes an uncontrolled random variable: realized accuracy depends on the unknown joint distribution of queries, confidences, and correctness, and no analytical relationship maps a target error rate onto a threshold. Practitioners cannot answer the question that matters for deployment, ``if I promise my users at most $5\%$ wrong answers, what should $\theta$ be?'', and must re-sweep $\theta$ on a labelled split for every new model pair, benchmark, or difficulty distribution.

We propose \textbf{Conformal Cascade} (CC), which replaces the confidence threshold with a \emph{conformal prediction set}. At each tier, split conformal calibration \citep{vovk2005algorithmic,papadopoulos2002inductive} turns a held-out labelled set into a threshold that satisfies a distribution-free coverage guarantee \emph{at that tier}: for any new query, the per-tier prediction set built from this threshold contains the correct answer with probability at least $1 - \alpha$, for a user-specified $\alpha$. The \emph{size} of this set, not the value of a confidence score, drives deferral. A set that collapses to a single answer declares the model conformally certain on this input, and the cascade accepts; a set with multiple candidates declares the model conformally uncertain, and the cascade escalates. Deferral cannot itself cause an error: only the acceptance event $|\mathcal{C}| = 1$ commits a wrong answer, so the conformal guarantee binds at whichever tier the cascade commits, and per-tier guarantees compose into a cascade-level coverage bound. The calibrated threshold is fully determined by $\alpha$, which reads directly as the user's error budget, and the expected cascade cost is a closed-form function of $\alpha$ and the calibration-set acceptance rate.

We make three contributions. We formulate the conformal cascade for arbitrary $K$ tiers, where prediction-set size replaces the confidence score, and prove a cascade-level coverage bound (Theorem~\ref{thm:cascade-coverage}): the accepting tier's prediction set covers the correct answer with probability at least $1 - K\alpha$ in the worst case, tightening to $1 - \alpha$ under a selection-preservation condition consistent with our marginal-coverage measurements. We then derive a closed-form expression for expected cascade cost as a function of $\alpha$ (Theorem~\ref{thm:cost-coverage}). Empirically, we evaluate two-tier instantiations ($K=2$) across four open-weight model families (Llama, Gemma, Ministral, Phi) and 18 multiple-choice benchmarks; CC strictly improves over the strongest calibration-tuned heuristic cascade on the majority of family--benchmark pairs, with the largest gains on reasoning-heavy tasks where majority vote is unreliable: examples include Ministral on MathQA, Phi on GPQA-Diamond and MedMCQA, and Gemma on TruthfulQA. The framework requires a finite, extractable answer set; extension to open-ended generation through answer clustering is left for future work. The method imposes no architectural constraints beyond self-consistency-style sampling, requires neither training nor logprob access, and exposes one interpretable hyperparameter.

%% file: sections/background.tex
\section{Background and Problem Setup}
\label{sec:background}

\paragraph{Split conformal prediction.} Given a nonconformity score $s : \Xcal \times \Ycal \to \mathbb{R}$ (lower indicates better fit), a calibration set $\Dcal = \{(x_i, y_i)\}_{i=1}^{n}$ drawn exchangeably with the test point, and a target miscoverage level $\alpha \in (0,1)$, split conformal prediction \citep{papadopoulos2002inductive,vovk2005algorithmic,lei2018distribution,angelopoulos2023gentle} sets the threshold
\begin{equation}
    \hat{q} = \mathrm{Quantile}\!\left(\{s_i\}_{i=1}^{n};\, \tfrac{\lceil (n+1)(1-\alpha) \rceil}{n}\right)
    \label{eq:quantile}
\end{equation}
and returns the prediction set $\Cal(x) = \{y \in \Ycal : s(x, y) \le \hat{q}\}$, which satisfies the distribution-free, finite-sample guarantee:
\begin{theorem}[Marginal coverage; \citealp{vovk2005algorithmic}]
\label{thm:marginal-coverage}
$\Prob[y_{n+1} \in \Cal(x_{n+1})] \ge 1 - \alpha$ whenever the calibration and test points are exchangeable.
\end{theorem}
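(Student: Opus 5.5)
The plan is the standard rank argument for split conformal prediction. Write $s_i = s(x_i, y_i)$ for $i = 1, \dots, n$ and $s_{n+1} = s(x_{n+1}, y_{n+1})$ for the test score. By the definition of $\Cal$, the event $\{y_{n+1} \in \Cal(x_{n+1})\}$ is exactly $\{s_{n+1} \le \hat{q}\}$. So it suffices to lower-bound $\Prob[s_{n+1} \le \hat{q}]$.

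\textbf{Step 1: reduce to a rank event.} Let $k = \lceil (n+1)(1-\alpha) \rceil$. If $k > n$, then by convention $\hat{q} = +\infty$ and coverage is trivially $1$. Otherwise, by \eqref{eq:quantile}, $\hat{q}$ is the $k$-th smallest of $s_1, \dots, s_n$, call it $s_{(k)}$. The key observation is
\[
s_{n+1} \le s_{(k)} \iff s_{n+1} \text{ is among the } k \text{ smallest of } s_1, \dots, s_{n+1}.
\]
To make this airtight with ties, I will break ties by an independent uniform random perturbation, or equivalently argue with ranks counted as ``number of $j$ with $s_j \le s_{n+1}$''. Adding the test point to the pool does not change the $k$-th smallest value whenever $s_{n+1}$ lies at or below it. This gives $\Prob[s_{n+1} \le \hat{q}] \ge \Prob[\mathrm{rank}(s_{n+1}) \le k]$, where the rank is taken among all $n+1$ scores.

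\textbf{Step 2: use exchangeability.} The pairs $(x_i, y_i)$, $i = 1, \dots, n+1$, are exchangeable, and $s$ is a fixed function (trained independently of the calibration data). Hence the score vector $(s_1, \dots, s_{n+1})$ is exchangeable, and so, with random tie-breaking, $\mathrm{rank}(s_{n+1})$ is uniform on $\{1, \dots, n+1\}$. Therefore
\[
\Prob[\mathrm{rank}(s_{n+1}) \le k] = \frac{k}{n+1} \ge 1 - \alpha.
\]
The inequality holds because $k \ge (n+1)(1-\alpha)$.

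\textbf{Main obstacle.} The only delicate point is handling ties in Step~1 without losing the inequality direction. With ties, the deterministic rank of $s_{n+1}$ is not uniform. However, ties can only help coverage, since $s_{n+1} \le s_{(k)}$ holds whenever the tie-broken rank is at most $k$. I will therefore first prove the result under random tie-breaking, then note that the event with ties contains the tie-broken event. I will also state explicitly that $s$ must not depend on $\Dcal$, since this is what makes the scores exchangeable.
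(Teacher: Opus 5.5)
Your proposal is correct and is the standard rank argument that the paper relies on: the paper cites the result to \citet{vovk2005algorithmic} rather than proving it, and its only sketch (after Proposition~\ref{prop:marginal-ub}) uses the same order-statistic reasoning, with ties handled by the same observation that conservative tie-breaking only enlarges the set. One wording fix: your sentence that inserting $s_{n+1}$ into the pool ``does not change the $k$-th smallest value'' is false, since the $k$-th smallest of the augmented pool can be strictly smaller than $s_{(k)}$; drop it and rely on the containment from your final paragraph, namely that a tie-broken rank of at most $k$ forces $s_{n+1} \le s_{(k)}$.
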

The upper bound $1 - \alpha + 1/(n+1)$ holds when scores are almost surely distinct \citep{lei2018distribution}. Conditional coverage (per-$x$ at level $1-\alpha$) is impossible without further assumptions \citep{vovk2012conditional,lei2014distribution,barber2021limits}.

\paragraph{LLM cascades.} A $K$-tier LLM cascade is a sequence of models $m_1, \ldots, m_K$ ordered by increasing capability and per-call cost $c_1 < \cdots < c_K$. Tier $k$ produces one or more responses; a deferral rule then decides between accepting tier $k$'s answer and escalating to tier $k+1$. If the cascade accepts at tier $k^*$, the incurred cost is $\mathrm{Cost}(x) = \sum_{k=1}^{k^*} c_k$, since a cheap-model response is always produced before an expensive one is requested.

\paragraph{Problem.} For queries with a finite answer set $\Acal(x)$ (multiple choice, or extractable short answers), let $\ytrue \in \Acal(x)$ be the true answer and $\Cal_{k^*}(x)$ the conformal prediction set at the accepting tier. The goal is to minimise expected inference cost $\expect[\mathrm{Cost}(X)]$ subject to
\begin{equation}
    \Prob\!\left[\ytrue \in \Cal_{k^*}(X)\right] \;\ge\; 1 - \alpha,
    \label{eq:goal}
\end{equation}
where the coverage target is on the prediction set at whichever tier the cascade commits, not at a fixed tier index; this is what makes the guarantee a cascade-level statement.

%% file: sections/method.tex
\section{Method}
\label{sec:method}

We describe each component of the Conformal Cascade. Sections~\ref{sec:score}--\ref{sec:calib} introduce the black-box nonconformity score and its per-tier calibration. Section~\ref{sec:alg} states the cascade algorithm. Sections~\ref{sec:theory}--\ref{sec:cost-coverage} establish the cascade-level coverage guarantee and the cost--accuracy trade-off. Proofs, auxiliary lemmas, and the computational-complexity breakdown are collected in Appendix~\ref{app:method-details}.

The core idea is to replace the signal-plus-threshold deferral rule of Section~\ref{sec:intro} with a calibrated prediction set at each tier, whose size is the signal. A confidence threshold $\theta$ has no formal tie to an error rate; the conformal threshold $\hat q_k$ has the tie supplied by Theorem~\ref{thm:marginal-coverage}. The user-specified parameter $\alpha$ then appears in two roles: as a coverage level (Theorem~\ref{thm:cascade-coverage}) and as a handle on expected cost (Theorem~\ref{thm:cost-coverage}).

\subsection{Frequency-Based Nonconformity Score}
\label{sec:score}

Split conformal prediction (Theorem~\ref{thm:marginal-coverage}) requires a real-valued nonconformity score on $(x, a)$ but is silent about where the score comes from. Under black-box API access, where token-level probabilities are typically not exposed, the natural choice is the sampling frequency of $a$ under repeated decoding from $m_k$. A logprob-based variant $s_k(x, a) = 1 - p_k(a \mid x)$ slots into the same framework with finer resolution; we develop the frequency version because it preserves the black-box interface assumed by many production deployments and discuss the comparison to the logprob alternative in Appendix~\ref{app:score-details}.

For tier $k$, we draw $N$ independent responses from $m_k$ at temperature $\tau > 0$ (the self-consistency regime of \citet{wang2023selfconsistency}), giving extracted answers $\hat{y}_1^{(k)}, \ldots, \hat{y}_N^{(k)}$. When the answer set $\Acal(x)$ varies per query (TruthfulQA has $|\Acal(x)| \in \{4, \ldots, 10\}$), all quantities below are defined per-query on $\Acal(x)$. The frequency-based nonconformity score for $a \in \Acal(x)$ is
\begin{equation}
    s_k(x, a) \;=\; 1 - \frac{\bigl|\{i : \hat{y}_i^{(k)} = a\}\bigr|}{N}.
    \label{eq:score}
\end{equation}
An answer sampled often scores low (high conformity); a rarely sampled answer scores high (low conformity). The score uses only black-box API access, reuses the $N$ samples already produced under self-consistency decoding, and composes naturally with a majority-vote tiebreaker when multiple answers survive calibration; its validity as a nonconformity measure follows \citet{kumar2023conformal}. Appendix~\ref{app:score-details} (Lemma~\ref{lem:exchangeability}) certifies that exchangeability of queries lifts to exchangeability of scores under per-query independent Monte-Carlo draws.

\subsection{Per-Tier Conformal Calibration}
\label{sec:calib}

Calibration converts the score $s_k$ into a prediction-set construction rule with a concrete coverage guarantee at level $\alpha$. For each tier $k$, we calibrate a threshold $\hat{q}_k$ on a held-out calibration set $\Dcal = \{(x_i, y_i)\}_{i=1}^{n}$ by (i) drawing $N$ samples from $m_k$ for each calibration query and computing $s_i^{(k)} = s_k(x_i, y_i)$, (ii) sorting the scores $s_{(1)}^{(k)} \le s_{(2)}^{(k)} \le \cdots \le s_{(n)}^{(k)}$, and (iii) setting $\hat{q}_k = s_{(\lceil (1-\alpha)(n+1) \rceil)}^{(k)}$. The prediction set at tier $k$ for a new query $x$ is
\begin{equation}
    \Cal_k(x) \;=\; \{a \in \Acal : s_k(x, a) \le \hat{q}_k\}.
    \label{eq:tier-set}
\end{equation}
Operationally, $\hat q_k$ is the smallest nonconformity score the calibration set permits at coverage level $\alpha$: answers scoring below it pass the conformal filter, those above are filtered out. By Theorem~\ref{thm:marginal-coverage}, under exchangeability $\Prob[\ytrue \in \Cal_k(x)] \ge 1 - \alpha$ at each tier in isolation. Calibration-size choice, quantile tie-breaking, and computational complexity are detailed in Appendix~\ref{app:calib-details}.

\subsection{The Conformal Cascade Algorithm}
\label{sec:alg}

With per-tier prediction sets in hand, the cascade asks one question at each tier: is the calibrated set small enough to commit on? Algorithm~\ref{alg:cc} specifies the procedure. An acceptance threshold $\kappa \ge 1$ governs how small a set counts as a commitment (default $\kappa = 1$).

\begin{algorithm}[t]
\caption{Conformal Cascade (CC)}
\label{alg:cc}
\begin{algorithmic}[1]
\Require Query $x$; models $m_1, \ldots, m_K$; calibrated thresholds $\hat{q}_1, \ldots, \hat{q}_K$; acceptance parameter $\kappa$
\Ensure Answer $\hat{y}$, accepting tier $k^*$
\For{$k = 1, \ldots, K$}
    \State Draw $N$ samples from $m_k$ on $x$
    \State Compute $s_k(x, a)$ for each $a \in \Acal$ \Comment{Eq.~\eqref{eq:score}}
    \State Construct $\Cal_k(x) = \{a \in \Acal : s_k(x, a) \le \hat{q}_k\}$ \Comment{Eq.~\eqref{eq:tier-set}}
    \If{$1 \le |\Cal_k(x)| \le \kappa$}
        \State $\hat{y} \gets \arg\min_{a \in \Cal_k(x)} s_k(x, a)$ \Comment{Most frequent answer in set}
        \State \Return $\hat{y}, k^* = k$
    \EndIf
\EndFor
\State $\hat{y} \gets \arg\min_{a \in \Acal} s_K(x, a)$ \Comment{Fallback at last tier}
\State \Return $\hat{y}, k^* = K$
\end{algorithmic}
\end{algorithm}

\paragraph{Interpretation.} The set size measures tier-level uncertainty through the conformal filter. A set of size one (the cascade accepts, the model is \emph{conformally certain}) commits to an answer; size larger than $\kappa$ (\emph{conformally uncertain}) escalates. An empty set is an out-of-distribution signal and is also treated as a defer. Relaxed acceptance ($\kappa > 1$) accepts when the set is merely small, breaks argmin ties lexicographically on $\Acal$, and trades a controlled amount of correctness for earlier acceptance (Section~\ref{sec:cost-coverage}). Our implementation runs the fallback (line~11) over $\Cal_K(x)$ when $|\Cal_K(x)| \ge 1$ and over $\Acal$ only when the tier-$K$ set is empty; this affects boundary behaviour on benchmarks where the cascade defers all queries but does not change the offline coverage guarantee (Appendix~\ref{app:weak-families}).

\paragraph{Worked example.} Consider a 4-choice query $x$ on which tier-1 self-consistency draws ($N = 16$) yield extracted-answer counts $\{A: 10, B: 4, C: 1, D: 1\}$ and tier-1 score values $s_1(x, A) = 6/16, s_1(x, B) = 12/16, s_1(x, C) = 15/16, s_1(x, D) = 15/16$. With a tight calibrated threshold $\hat q_1 = 0.50$ (a small $\alpha$), only $A$ passes: $\Cal_1(x) = \{A\}$, the cascade accepts at tier 1, and the per-query cost is $c_1$. With a looser $\hat q_1 = 0.80$, the set becomes $\Cal_1(x) = \{A, B\}$: the cascade defers under $\kappa = 1$ and pays $c_1 + c_2$, but with $\kappa = 2$ it accepts and returns $\arg\min_{a \in \{A, B\}} s_1(x, a) = A$. The same query commits at tier 1 or escalates depending on $\alpha$ and $\kappa$; the conformal threshold turns the raw frequency distribution into a calibrated commit/defer signal.

\subsection{Theoretical Analysis}
\label{sec:theory}

Let $K^*(x) \in \{1, \ldots, K\}$ denote the first tier at which the cascade commits in Algorithm~\ref{alg:cc} (defined as $K$ when the fallback fires), and let $A = \{|\Cal_{K^*(x)}(x)| = 1\}$ be the \emph{acceptance event}.

\begin{theorem}[Cascade coverage]
\label{thm:cascade-coverage}
Let $(x, \ytrue)$ be exchangeable with each tier-$k$ calibration set, for $k = 1, \ldots, K$. In a $K$-tier Conformal Cascade with per-tier level $\alpha$ and acceptance parameter $\kappa = 1$,
\begin{equation}
    \Prob\!\left[\,\hat y \ne \ytrue,\, A\,\right] \;\le\; K \cdot \alpha,
    \label{eq:cascade-coverage}
\end{equation}
so per-tier level $\alpha / K$ ensures cascade error at most $\alpha$.
\end{theorem}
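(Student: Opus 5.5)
The plan is to decompose the error-and-accept event by accepting tier and reduce each piece to a per-tier miscoverage event. With $\kappa = 1$ these are handled by Theorem~\ref{thm:marginal-coverage}.

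\textbf{Step 1: decompose by accepting tier.} The event $A$ means the cascade commits at some tier $k$ with $|\Cal_k(x)| = 1$. The fallback at tier $K$ belongs to $A$ only when $|\Cal_K(x)| = 1$, and then it coincides with ordinary acceptance at tier $K$. Hence
\[
\{\hat y \ne \ytrue,\, A\} \;=\; \bigcup_{k=1}^{K} \bigl\{K^*(x) = k,\ |\Cal_k(x)| = 1,\ \hat y \ne \ytrue\bigr\}.
\]

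\textbf{Step 2: an error on acceptance is a miscoverage event.} On the $k$-th event, $\Cal_k(x) = \{a\}$ for a single answer $a$. Since $\kappa = 1$, the argmin over $\Cal_k(x)$ returns $\hat y = a$. So $\hat y \ne \ytrue$ forces $\ytrue \notin \Cal_k(x)$. I will therefore drop the conditions $K^*(x)=k$ and $|\Cal_k(x)|=1$ and bound the $k$-th event by the larger event $\{\ytrue \notin \Cal_k(x)\}$. This enlargement is the key move: it removes every dependence on the selection $K^*(x)=k$, which would otherwise break exchangeability, because conditioning on reaching tier $k$ is data-dependent.

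\textbf{Step 3: apply per-tier coverage and the union bound.} Theorem~\ref{thm:marginal-coverage} applies at tier $k$ once the scores are exchangeable. The hypothesis of the theorem gives exchangeability of $(x,\ytrue)$ with the tier-$k$ calibration set, and Lemma~\ref{lem:exchangeability} lifts this to the scores $s_k$ under independent per-query Monte-Carlo sampling. Marginal coverage then yields $\Prob[\ytrue \notin \Cal_k(x)] \le \alpha$ for each $k$. A union bound over $k=1,\dots,K$ gives the bound $K\alpha$. The corollary follows by substituting $\alpha/K$ as the per-tier level.

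\textbf{Main obstacle.} The delicate point is making sure tier $k$'s coverage statement is unconditional, i.e.\ valid for the test query irrespective of the cascade's path. I plan to handle this conceptually: every tier is evaluated, offline, on every test query, so $\Cal_k(x)$ is well defined even when tier $k$ is never reached in deployment. Step 2 only uses containment in that offline event.

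A secondary check concerns randomness. The test point's $N$ samples must be drawn independently of the calibration draws, and the quantile index $\lceil(1-\alpha)(n+1)\rceil$ must be valid, i.e.\ at most $n$. The paper's appendix treats both (Appendix~\ref{app:calib-details}), and otherwise $\hat q_k=\infty$ trivially preserves coverage.
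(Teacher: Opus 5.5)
Your proposal is correct and follows the paper's proof essentially step for step. You split the accept-and-err event by committing tier, and note that with $\kappa = 1$ the returned answer is the unique element of $\Cal_k(x)$, so an error forces $\ytrue \notin \Cal_k(x)$; you then drop the selection condition and apply per-tier marginal coverage (via Lemma~\ref{lem:exchangeability}) with a union bound. Your explicit intersection with $\{|\Cal_k(x)| = 1\}$ is slightly more careful than the paper's partition $A = \bigsqcup_k \{K^*(x) = k\}$, which overlooks that the fallback path also has $K^*(x) = K$ but lies outside $A$.
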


The proof partitions $A$ into the $K$ disjoint first-commit events $A_k = \{K^*(x) = k\}$ and applies Theorem~\ref{thm:marginal-coverage} on each (Appendix~\ref{app:thm-cascade}). Two facts about the bound matter for practice. Deferral cannot itself cause an error: only the acceptance event $A$ commits a wrong answer, escalation only incurs additional cost, and the cascade error decomposes over the disjoint commit events $\{A_k\}_{k=1}^K$. Per-tier guarantees therefore compose into a cascade-level bound by a union bound on these disjoint events. The factor $K$ is the union-bound worst case; tighter statements hold under selective-inference conditions \citep{bates2023testing,angelopoulos2023gentle}, captured by the next assumption.

\begin{assumption}[Selection preservation]
\label{asm:selection-preservation}
For each tier $k$, $\Prob[\,\ytrue \in \Cal_k(x) \mid K^*(x) = k\,] \ge 1 - \alpha$.
\end{assumption}

Under Assumption~\ref{asm:selection-preservation} the cascade error is bounded by $\alpha$ directly. The first-commit event $\{K^*(x) = k\}$ is label-free---a function of $x$ and the calibration scores, not $\ytrue$---which places it in the family studied by selective conformal inference; the assumption asks that this label-free selection preserve per-tier coverage, and is not derivable from split CP alone. Section~\ref{sec:coverage} reports marginal miscoverage tracking $\alpha$ rather than $K\alpha$, which rules out the $K\alpha$ bound being tight here but is only \emph{consistent with}, not strictly implied by, Assumption~\ref{asm:selection-preservation} (which would require per-tier conditional miscoverage). Two further notes on scope: Theorem~\ref{thm:cascade-coverage} bounds the joint event \{error, acceptance\}; on the fallback path the cascade returns an argmin without an active conformal commit and inherits no per-tier guarantee (rare in practice---Appendix~\ref{app:weak-families}). For $\kappa > 1$ the union bound controls \emph{set} coverage, and the top-1 error picks up an additional slack from the argmin tiebreaker (Remark~\ref{rem:relaxed-kappa}, Appendix~\ref{app:thm-cascade}). The relaxed-acceptance variant and the heterogeneous-$\alpha_k$ extension are also in Appendix~\ref{app:thm-cascade}.

\subsection{Cost--Coverage Trade-off}
\label{sec:cost-coverage}

The accuracy guarantee comes paired with a closed-form expression for expected cost.

\begin{theorem}[Cost--coverage, two-tier]
\label{thm:cost-coverage}
For a two-tier cascade $(K = 2)$ with costs $c_1 < c_2$ and $\kappa = 1$,
\begin{equation}
    \expect[\mathrm{Cost}(X)] \;=\; c_1 + c_2 \cdot \Prob\!\left[\,|\Cal_1(X)| \ne 1\,\right].
    \label{eq:cost-closed-form}
\end{equation}
\end{theorem}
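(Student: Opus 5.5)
The identity follows directly from the cost model in Section~\ref{sec:background} by conditioning on which tier the cascade commits at. With $K=2$ and $\kappa=1$, Algorithm~\ref{alg:cc} has only two execution paths for a query $X$. If $|\Cal_1(X)| = 1$, it returns at tier~1 and $K^*(X)=1$. Otherwise, that is when $|\Cal_1(X)|=0$ (an empty set, treated as a defer) or $|\Cal_1(X)|\ge 2$, the loop proceeds to tier~2, and the algorithm returns with $K^*(X)=2$. This holds whether tier~2 accepts or the fallback on line~11 fires, since by definition $K^*=K$ on the fallback path.

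\textbf{Step 1 (pathwise cost).} By the definition $\mathrm{Cost}(x)=\sum_{k=1}^{K^*(x)} c_k$, we get the pointwise identity
\begin{equation*}
\mathrm{Cost}(X) \;=\; c_1 + c_2\,\indic\bigl\{|\Cal_1(X)|\ne 1\bigr\}.
\end{equation*}
The cost $c_1$ is always paid. The cost $c_2$ is paid exactly on the deferral event, and what happens at tier~2 does not affect the incurred cost. The key observation is that the deferral event $\{K^*(X)=2\}$ coincides with $\{|\Cal_1(X)|\neq 1\}$. This uses $\kappa=1$, under which the acceptance condition $1\le|\Cal_1|\le\kappa$ reduces to $|\Cal_1|=1$. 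It also uses the convention that an empty set defers rather than terminates.

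\textbf{Step 2 (take expectations).} The cost is bounded and hence integrable, so linearity of expectation together with $\expect[\indic_E]=\Prob[E]$ gives \eqref{eq:cost-closed-form}. The probability is taken jointly over the test query $X$, the tier-1 Monte-Carlo samples, and, if not conditioned on, the calibration data determining $\hat q_1$. I would state explicitly that the identity holds conditionally on $\Dcal$ as well as marginally, because it is pathwise.

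The only real obstacle is bookkeeping rather than analysis. We must check that the fallback branch and the empty-set case are both charged $c_1+c_2$ and never $c_1$ alone, so that the event in the indicator is exactly $\{|\Cal_1(X)|\ne1\}$ rather than $\{|\Cal_1(X)|\ge 2\}$. We also need to note that the implementation detail of running the fallback over $\Cal_K$ does not alter the cost. No exchangeability or coverage result is needed. Theorem~\ref{thm:marginal-coverage} would only enter if one subsequently wanted to bound $\Prob[|\Cal_1(X)|\ne1]$ in terms of $\alpha$ and the calibration acceptance rate, which lies outside this statement.
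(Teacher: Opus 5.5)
Your proposal is correct and follows the paper's proof: both observe that with $K=2$ and $\kappa=1$ the cascade pays $c_1$ exactly when $|\Cal_1(X)|=1$ and $c_1+c_2$ otherwise, since tier-1 samples are always drawn, and then apply linearity of expectation. Your explicit treatment of the empty-set and fallback branches is bookkeeping the paper leaves implicit; the paper's proof also sketches the $\alpha\to 0$ and $\alpha\to 1$ limits, which are commentary and not part of the stated identity.
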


Expected cost depends only on the calibration-set non-acceptance rate, which is observable on labelled calibration data without further test queries. The asymptotes are clean: as $\alpha \to 0$ prediction sets grow and cost approaches $c_1 + c_2$; as $\alpha \to 1$ sets shrink to singletons and cost approaches $c_1$. Cost is monotone in $\alpha$ on the admissible range, and the cascade strictly improves over the always-defer baseline ($\expect[\mathrm{Cost}] < c_1 + c_2$) whenever the cheap model produces any singleton sets on calibration (Proposition~\ref{prop:save-cost}). Improving over Always-Strong (which skips tier 1 and pays $c_2$) requires the stricter condition $\Prob[|\Cal_1(X)| = 1] > c_1 / c_2$, since the cascade pays $c_1$ per query for the tier-1 samples regardless of whether they trigger acceptance.

For $K > 2$ tiers, $\expect[\mathrm{Cost}(X)] = \sum_{k=1}^{K} (\sum_{j=1}^{k} c_j) \cdot p_k$ with $p_k = \Prob[K^*(X) = k]$ (Corollary~\ref{cor:k-tier-cost}, appendix). Combined with Theorem~\ref{thm:cascade-coverage}, this gives a practitioner recipe: for a target cascade error rate $\beta$, set $\alpha = \beta / K$ in the worst case or $\alpha = \beta$ under Assumption~\ref{asm:selection-preservation} as a deployment heuristic (Remark~\ref{prop:optimal-alpha}, appendix; finite-$N$ monotonicity is checked by a small calibration sweep, Appendix~\ref{app:cost-sensitivity}). The user fixes the error budget; the cascade picks the threshold.

Concretely, with $K = 2$ and $c_1 = 1$, $c_2 = 2.7$, an $80\%$ calibration acceptance rate gives $\expect[\mathrm{Cost}] = 1.54$ ($43\%$ saving over Always-Strong); the saving as a function of cost ratio $r = c_2/c_1$ is in Appendix~\ref{app:cost-sensitivity}.

%% file: sections/related_work.tex
\section{Related Work}
\label{sec:related}

Split conformal prediction \citep{vovk2005algorithmic,papadopoulos2002inductive,angelopoulos2023gentle,shafer2008tutorial} produces prediction sets with distribution-free finite-sample coverage and has been extended to regression \citep{romano2019conformalized,lei2018distribution}, classification with adaptive coverage \citep{romano2020classification,angelopoulos2021uncertainty}, jackknife-style resampling \citep{barber2021predictive}, and covariate shift \citep{tibshirani2019conformal,gibbs2021adaptive,angelopoulos2024pid}. Recent LLM-specific variants address the settings where standard scores fail: sampling-frequency nonconformity when logprobs are unavailable \citep{kumar2023conformal}, conformal stopping rules for open-ended generation \citep{quach2024conformal,deutschmann2024conformal}, conformal factuality for free-form generation \citep{mohri2024conformal,cherian2024conformal}, and conformal binary routing between a base LLM and a reasoning model \citep{su2025cprouter}. These methods cover single-model or two-model settings; none constructs conformal sets at every tier of a cascade, uses set \emph{size} as the deferral criterion, or proves coverage composition across tiers. On the cascade side, heuristic-threshold methods \citep{chen2023frugalgpt,madaan2024automix}, preference-learned routers \citep{ong2025routellm,ding2024hybrid}, benchmark-trained routers \citep{shnitzer2023routing}, mixture-of-thought cascades \citep{yue2024cascade}, and theoretical analyses of confidence-based cascade deferral \citep{jitkrittum2023cascade,gupta2022cascades} form the current production and analytical landscape; none of these provides a distribution-free finite-sample accuracy guarantee that holds at every tier, and each requires labelled data to tune per deployment.

\paragraph{Comparison to C3PO.} The closest prior work is C3PO \citep{valkanas2025c3po}, also a conformal-prediction approach to LLM cascades but on an orthogonal axis: C3PO bounds the probability that per-query inference cost exceeds a budget ($\Pr[\mathrm{cost} > C^*] \le \alpha$) using a confidence-threshold deferral rule and self-supervised calibration, whereas CC bounds the probability that the accepting tier's set misses the correct answer using a binary set-size test and labelled calibration. C3PO suits a fixed compute budget seeking cost control; CC suits a fixed error budget seeking accuracy control. Appendix~\ref{app:c3po-table} gives the row-by-row design contrast.

Underneath these threshold-based cascades lies a calibration problem: neural-network confidence is systematically miscalibrated \citep{guo2017calibration}, and elicited LLM confidence in particular drifts on reasoning tasks \citep{kadavath2022languagemodels,xiong2024uncertainty,lin2022teaching,tian2023just,kuhn2023semantic,lin2024generating}. Classical post-hoc rescaling (Platt scaling, temperature scaling) adjusts score magnitudes without restoring a distribution-free guarantee, whereas conformal prediction sidesteps calibration by bounding set-level coverage directly. Beyond calibration, adaptive compute allocation at inference time is an active area: \citet{snell2025scaling} show that the optimal allocation depends on query difficulty, \citet{setlur2025scaling} argue that verification-based strategies outperform verification-free sampling, and \citet{lightman2024verify} show that verifiers trained at the step level can substantially improve reasoning reliability; CC is verification-based in this sense, with the verifier being a conformal set-size check whose correctness carries a distribution-free guarantee. Our frequency-based score builds on self-consistency decoding \citep{wang2023selfconsistency,chen2024universal}, which is itself part of the chain-of-thought family \citep{wei2022chainofthought}.

%% file: sections/experiments.tex
\section{Experiments}
\label{sec:exp}

\subsection{Setup}
\label{sec:setup}

\paragraph{Models.} We evaluate two-tier ($K=2$) cascades drawn from four open-weight instruction-tuned families: \textbf{Llama} \citep{dubey2024llama3}, \textbf{Gemma} \citep{gemma2024team}, \textbf{Ministral} \citep{mistralai2024ministral}, and \textbf{Phi} \citep{abdin2024phi3}; while the framework supports arbitrary $K$, the empirical sweep is restricted to $K=2$. Tier 1 is the smallest public instruction-tuned checkpoint; tier 2 is a substantially larger checkpoint from the same family (identifiers in Appendix~\ref{app:models}). For each query we draw $N = 16$ samples at temperature $\tau = 0.7$ from each tier via vLLM \citep{kwon2023vllm}. Per-call costs are fixed at $c_1 = 1$, $c_2 = 2.7$ across all four families; sensitivity to alternative $c_2/c_1$ is in Appendix~\ref{app:cost-sensitivity}.

\paragraph{Benchmarks.} We evaluate on 18 multiple-choice benchmarks covering science \citep{welbl2017sciq,khot2020qasc,clark2018arc,rein2024gpqa}, medicine \citep{pal2022medmcqa,jin2021medqa,jin2019pubmedqa}, broad knowledge \citep{hendrycks2021mmlu,wang2024mmlupro}, commonsense \citep{zellers2019hellaswag,bisk2020piqa,sap2019socialiqa,huang2019cosmosqa,clark2019boolq}, factuality \citep{lin2022truthfulqa}, standardized exams \citep{zhong2024agieval}, mathematics \citep{amini2019mathqa}, and multilingual reading \citep{bandarkar2024belebele}. Answer sets range from four to ten options. Each benchmark splits $30\%/70\%$ calibration/test with seed $42$, giving 72 family--benchmark pairs; full statistics in Table~\ref{tab:benchmarks}.

\paragraph{Baselines.} CC is compared against five label-light deferral rules that share its sample budget: (1) \textbf{Always-Weak} and (2) \textbf{Always-Strong} (majority vote at the corresponding tier, no tuning); (3) \textbf{Agreement cascade}, deferring if tier-1 top-answer agreement is below $\theta \in \{0.5, 0.6, 0.7, 0.8, 0.9\}$; (4) \textbf{Entropy cascade}, $\tau \in \{-1.5, -1.0, -0.5, -0.3\}$; (5) \textbf{Random defer}, $p \in \{0.2, 0.5, 0.8\}$. The heuristic grid contains 12 tunable configurations. Learned routers \citep{ong2025routellm,shnitzer2023routing,ding2024hybrid} require labelled preference data and a separate parametric scorer outside CC's calibration regime; we discuss them in Section~\ref{sec:related} but do not run them as direct baselines.

\paragraph{CC and reporting protocol.} CC is evaluated over a 15-point grid: $\alpha \in \{0.05, 0.10, 0.15, 0.20, 0.30\}$ and $\kappa \in \{1, 2, 3\}$. For each family--benchmark pair, \textbf{Best Heuristic} is the argmax test-set accuracy over the 12-configuration heuristic grid; \textbf{CC Best} is the argmax over the 15-configuration CC grid. Both methods are thus reported at oracle-tuned operating points per benchmark, on grids of comparable size. Coverage (Table~\ref{tab:coverage}, Appendix~\ref{app:coverage}) and tier-1 acceptance (Table~\ref{tab:setsize}, Appendix~\ref{app:setsize}) are reported at the tuning-free anchor $\alpha = 0.10, \kappa = 1$, where $\alpha$ reads as the user's error budget.

\subsection{CC strictly improves over heuristic cascades on accuracy}
\label{sec:main-results}

Table~\ref{tab:main-family} reports the per-benchmark, per-family accuracy gap $\Delta = \text{CC Best} - \text{Best Heuristic}$ across the 72 evaluated pairs at oracle-tuned operating points (both methods test-set argmax over their hyperparameter grids; deployment-time numbers at fixed $\alpha{=}0.10$, $\kappa{=}1$ are in Tables~\ref{tab:coverage}, \ref{tab:setsize}). CC strictly improves on 49 pairs, ties within $0.05$ pp on 6, and underperforms on 17. The six largest gains are Ministral/MathQA ($+4.6$), Phi/GPQA-Diamond ($+4.3$), Phi/MedMCQA ($+3.6$), Gemma/TruthfulQA ($+3.3$), Gemma/MedMCQA ($+3.0$), and Gemma/MedQA-USMLE-4 ($+2.8$). Every family has at least one gain $\ge +1.5$ pp, and all four families win the majority of their benchmarks. Full absolute accuracies are in Appendix~\ref{app:full-results}.

\begin{table}[t]
\caption{Accuracy gap $\Delta$ (in pp) between CC Best and Best Heuristic, per benchmark and model family. Both methods are tuned by test-set argmax over their hyperparameter grids (15 configurations for CC, 12 for the heuristic family; see Section~\ref{sec:setup}). \textbf{Bold} highlights $\Delta \ge +1.0$ pp; underlined highlights $\Delta \le -1.0$ pp. Row and column totals summarize wins~(W), ties within $0.05$ pp (T), and losses~(L). Gains are broad (49 wins of 72 pairs) and concentrate on reasoning-heavy benchmarks (GPQA-Diamond, MathQA, MMLU-Pro, MMLU, TruthfulQA, MedMCQA, MedQA-USMLE-4).}
\label{tab:main-family}
\centering
\small
\setlength{\tabcolsep}{4pt}
\begin{tabular}{lrrrr}
\toprule
Benchmark & Llama & Gemma & Ministral & Phi \\
\midrule
GPQA-Diamond   & $\bm{+1.5}$         & $+0.7$               & $+0.0$              & $\bm{+4.3}$         \\
MathQA         & $\bm{+1.7}$         & $+0.4$               & $\bm{+4.6}$         & $\underline{-1.4}$  \\
MMLU-Pro       & $\bm{+1.4}$         & $\bm{+2.4}$          & $\bm{+2.6}$         & $\bm{+2.0}$         \\
MMLU           & $+0.4$              & $\bm{+1.6}$          & $+0.7$              & $\bm{+1.5}$         \\
TruthfulQA   & $\bm{+1.9}$         & $\bm{+3.3}$          & $+0.7$              & $\bm{+1.2}$         \\
MedMCQA        & $+0.3$              & $\bm{+3.0}$          & $+0.8$              & $\bm{+3.6}$         \\
MedQA-USMLE-4  & $+0.4$              & $\bm{+2.8}$          & $\bm{+1.0}$         & $\bm{+2.2}$         \\
AGI-Eval       & $+0.2$              & $\bm{+2.0}$          & $\bm{+1.1}$         & $\underline{-4.5}$  \\
HellaSwag      & $+0.2$              & $\bm{+2.7}$          & $+0.4$              & $+0.4$              \\
ARC-Challenge  & $-0.1$              & $-0.1$               & $+0.9$              & $-0.4$              \\
PIQA           & $+0.2$              & $+0.0$               & $-0.6$              & $\bm{+1.3}$         \\
Social-i-QA    & $+0.2$              & $+0.1$               & $+0.7$              & $\bm{+1.0}$         \\
Cosmos-QA      & $-0.1$              & $-0.2$               & $+0.2$              & $+0.3$              \\
BoolQ          & $+0.4$              & $+0.0$               & $+0.3$              & $+0.1$              \\
QASC           & $+0.0$              & $\bm{+1.6}$          & $+0.2$              & $-0.1$              \\
SciQ           & $-0.1$              & $+0.0$               & $-0.1$              & $+0.2$              \\
Belebele-Eng   & $+0.2$              & $-0.3$               & $-0.2$              & $-0.5$              \\
PubmedQA       & $-0.2$              & $+0.0$               & $-0.7$              & $\underline{-1.4}$  \\
\midrule
W / T / L      & 13 / 1 / 4          & 11 / 4 / 3           & 13 / 1 / 4          & 12 / 0 / 6          \\
\bottomrule
\end{tabular}
\end{table}

\paragraph{Where the method is weaker.} Phi is the outlier on the losing side: large wins on reasoning and medicine (Phi/GPQA-Diamond $+4.3$, Phi/MedMCQA $+3.6$), but $-4.5$ pp on AGI-Eval and $-1.4$ pp on both MathQA and PubmedQA. The failure mode is diagnosable from the tier-1 set-size distribution (Section~\ref{sec:when-help}): on these benchmarks the Phi mini checkpoint spreads sampling probability so broadly that every tier-1 conformal set retains all (or nearly all) answer choices, the cascade defers $100\%$ of queries to tier 2, and the cascade's fallback selection rule under-performs Always-Strong's unrestricted majority on the same tier-2 samples (e.g., $45.7\%$ vs $51.0\%$ on Phi/AGI-Eval; see Appendix~\ref{app:weak-families}). CC is most useful when the cheap model's sampling distribution is concentrated enough to commit at tier 1 on easy queries and bimodal-or-narrow on hard queries to leave a small conformal set; uniformly diffuse tier-1 distributions, like Phi on AGI-Eval, eliminate both modes.

\subsection{The coverage guarantee holds empirically}
\label{sec:coverage}

Table~\ref{tab:coverage} reports the empirical set-miscoverage rate $\Prob[\ytrue \notin \Cal_{K^*(x)}(x)]$ for CC-Offline at $\alpha \in \{0.05, 0.10, 0.20, 0.30\}$ on the Llama cascade. At $\alpha = 0.10$, six of eight benchmarks meet the strict target within the $\pm 0.02$ finite-sample band; MedMCQA at $0.130$ and ARC-Challenge at $0.124$ exceed strict $\alpha$ by margins that are statistically significant given their test-set sizes (5.4$\sigma$ and 2.3$\sigma$, respectively), but both remain comfortably inside the worst-case $K\alpha = 0.20$ bound of Theorem~\ref{thm:cascade-coverage}. The same pattern (small overshoot at strict $\alpha$, well within $K\alpha$) holds on five additional cells across the other three families (Appendix~\ref{app:coverage} marks them with $^*$); every reported cell satisfies the union bound. The empirical regime is therefore between $\alpha$ and $K\alpha$, closer to the former. This rules out the worst-case $K\alpha$ bound being tight in our setting; it is consistent with, but does not directly verify, Assumption~\ref{asm:selection-preservation}, which would require the per-tier conditional miscoverage breakdown that our aggregate logs do not retain (Section~\ref{sec:theory}). Results for the other three families are in Appendix~\ref{app:coverage}.

\begin{table}[t]
\caption{Empirical set miscoverage for CC-Offline on the Llama cascade. \textbf{Bold} cells satisfy the strict $\alpha$ target within finite-sample fluctuation (miscoverage $\le \alpha + 0.02$). All non-bold cells lie inside the union-bound $K\alpha = 2\alpha$ tolerance from Theorem~\ref{thm:cascade-coverage}.}
\label{tab:coverage}
\centering
\small
\begin{tabular}{lcccc}
\toprule
Benchmark & $\alpha = 0.05$ & $\alpha = 0.10$ & $\alpha = 0.20$ & $\alpha = 0.30$ \\
\midrule
AGI-Eval       & \textbf{0.000} & \textbf{0.000} & \textbf{0.169} & \textbf{0.298} \\
MMLU-Pro       & \textbf{0.000} & \textbf{0.000} & \textbf{0.194} & \textbf{0.295} \\
MMLU           & \textbf{0.000} & \textbf{0.087} & \textbf{0.210} & \textbf{0.298} \\
TruthfulQA     & \textbf{0.000} & \textbf{0.000}  & 0.278          & 0.357           \\
GPQA-Diamond   & \textbf{0.000} & \textbf{0.000}  & \textbf{0.000} & \textbf{0.223} \\
MedMCQA        & \textbf{0.000} & 0.130           & 0.254          & 0.351           \\
ARC-Challenge  & 0.078          & 0.124          &\textbf{0.166}  & \textbf{0.207} \\
HellaSwag      & \textbf{0.000} & \textbf{0.065}  & \textbf{0.199} & \textbf{0.308} \\
\bottomrule
\end{tabular}
\end{table}

\subsection{The cost--accuracy Pareto frontier}
\label{sec:pareto}

Accuracy at the best-$\alpha$ operating point is only part of the story. In production, the practitioner-facing question is how CC trades cost for accuracy across a range of operating points. Figure~\ref{fig:pareto-mmlu} plots the cost--accuracy scatter on MMLU for three of the four cascades (Llama, Gemma, Ministral), sweeping every hyperparameter of every method; the Phi panel is in Appendix~\ref{app:pareto-all}.

\begin{figure}[t]
\centering
\includegraphics[width=\linewidth]{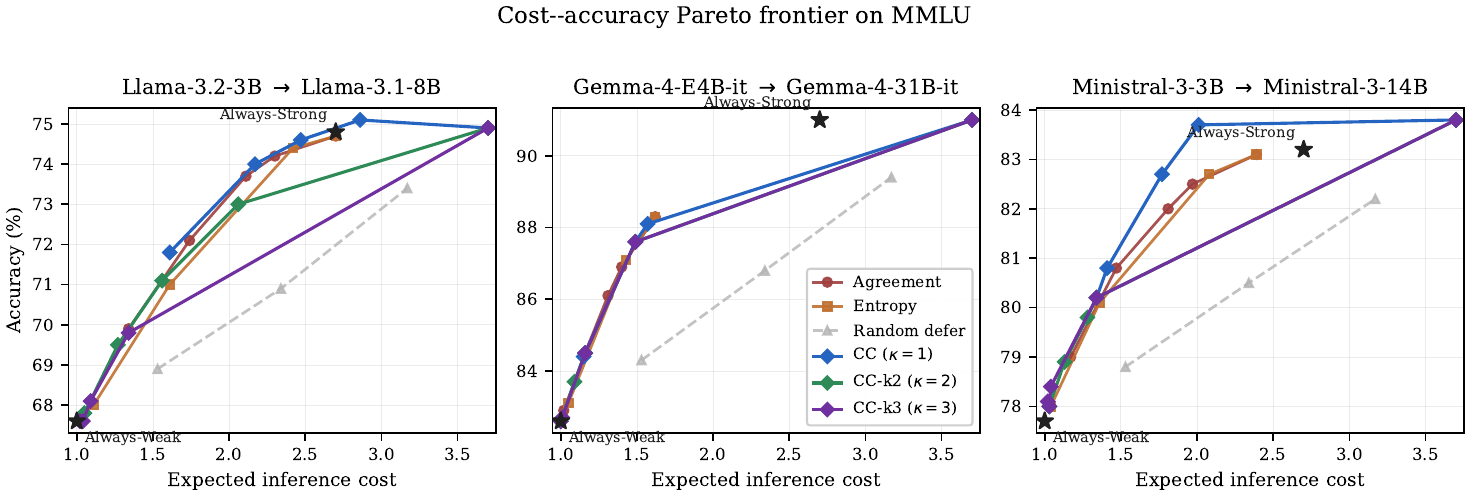}
\caption{Cost--accuracy Pareto frontier on MMLU for three cascades. Each method traces a curve through its hyperparameter sweep (points joined left-to-right by cost). CC ($\kappa = 1$, blue) dominates every heuristic on Llama and Ministral across the full cost range; on Gemma, where tier-1 is already at $82.6\%$, CC matches or slightly improves on Agreement. Relaxed-acceptance CC-k2 (green) and CC-k3 (purple) unlock cheaper operating points while retaining the conformal guarantee: on Llama, moving from $\kappa=1$ to $\kappa=3$ at $\alpha = 0.10$ cuts expected cost from $2.86$ to $1.34$ ($53\%$ reduction) at a $5.3$ pp accuracy cost.}
\label{fig:pareto-mmlu}
\end{figure}

On \textbf{Llama}, CC ($\kappa=1$) sits strictly above every heuristic for expected cost $\ge 1.5$ (e.g., at cost $2.86$, $75.1\%$ vs Always-Strong's $74.8\%$ at $2.70$); on \textbf{Ministral}, CC dominates Agreement across the mid-cost range (e.g., $83.7\%$ at cost $2.01$ vs Agree($\theta=0.8$)'s $82.5\%$ at $1.97$); on \textbf{Gemma}, where tier-1 already hits $82.6\%$, headroom is smaller and CC matches or slightly exceeds the heuristic frontier. Relaxing $\kappa$ to $\{2, 3\}$ accepts earlier and trades within-set selection accuracy for lower cost, moving along a second Pareto axis that no single agreement or entropy threshold tracks. Full per-family curves for the remaining 17 benchmarks are in Appendix~\ref{app:pareto-all}.

\subsection{Where the gains come from}
\label{sec:when-help}

Where in the data does CC's advantage concentrate? Two stratifications give complementary views. A difficulty breakdown on MMLU-Pro (Llama, $\alpha=0.10$, $100\%$ deferral to tier 2) shows CC's gap over Always-Strong sitting on the hard bucket: $7.2\%$ versus $3.4\%$ on queries where the strong model's per-query accuracy $\hat p < 0.25$, a $+3.8$ pp gain. On the easy and medium buckets the methods sit within $0.5$ pp of each other. The same hard-bucket-concentrated pattern holds on Ministral/MMLU and Ministral/MMLU-Pro (CC ahead by $4$--$6$ pp on hard queries); on Gemma/MMLU-Pro the strong model bottoms out at $0\%$ on hard queries and CC has no headroom to recover. Full per-bucket numbers for all four cells are in Appendix~\ref{app:difficulty}.

The second view is the tier-1 prediction-set distribution. Table~\ref{tab:setsize} reports the tier-1 acceptance rate ($|\Cal_1| = 1$) and median set size at $\alpha = 0.10$ on the Llama cascade. Acceptance ranges from $0\%$ on MMLU-Pro and GPQA-Diamond (all queries defer) to $95.6\%$ on Belebele-Eng (almost all queries commit at tier 1): the cascade automatically routes benchmarks where the small model is reliable to tier 1 and defers the rest, without per-benchmark tuning.

\begin{table}[t]
\caption{Tier-1 acceptance rate ($|\Cal_1| = 1$) at $\alpha = 0.10$ on the Llama cascade, ordered by acceptance. When the small model is broadly confident (top rows), the cascade commits most queries cheaply; when it is uniformly uncertain (bottom rows), the cascade defers everything. Full per-family distributions in Appendix~\ref{app:setsize}.}
\label{tab:setsize}
\centering
\small
\begin{tabular}{lccc}
\toprule
Benchmark & $|\Acal|$ & Tier-1 accept (\%) & Median $|\Cal_1|$ \\
\midrule
Belebele-Eng     & 4   & 95.6 & 1 \\
SciQ             & 4   & 94.0 & 1 \\
ARC-Challenge    & 4   & 85.6 & 1 \\
Cosmos-QA        & 4   & 85.6 & 1 \\
PIQA             & 2   & 80.3 & 1 \\
QASC             & 8   & 75.7 & 1 \\
BoolQ            & 2   & 72.8 & 1 \\
MMLU             & 4   & 31.3 & 2 \\
MedMCQA          & 4   & 22.0 & {2} \\
GPQA-Diamond     & 4   &  0.0 & 4 \\
MMLU-Pro         & 10  &  0.0 & 10 \\
\bottomrule
\end{tabular}
\end{table}

\paragraph{Diagnosing failures.} On Phi/AGI-Eval ($\Delta = -4.5$ pp), the tier-1 conformal sets never collapse to a singleton at any tested $\alpha$, the cascade defers $100\%$ of queries, and the gap to Always-Strong comes from the fallback selection rule (Appendix~\ref{app:weak-families}).

%% file: sections/discussion.tex
\section{Discussion}
\label{sec:discussion}

\paragraph{Deployment recipe.}
Setup is three steps: hold out a labelled calibration split per tier ($n \ge 200$ recommended), draw $N$ self-consistency samples per calibration query at the production temperature and store $\hat q_k$, and at inference run Algorithm~\ref{alg:cc}. The practitioner picks $\beta$ and sets $\alpha = \beta / K$ in the worst case or $\alpha = \beta$ under Assumption~\ref{asm:selection-preservation}. A label-free guard rail handles the diffuse-tier-1 regime (Phi/AGI-Eval, Section~\ref{sec:when-help}): if calibration tier-1 acceptance falls below the break-even rate $1/r$ ($r = c_2/c_1$; $r=2.7$ here gives $1/r \approx 0.37$, larger production ratios push it lower; Appendix~\ref{app:cost-sensitivity}), replacing CC with Always-Strong saves $c_1$ per query at no accuracy cost.

\paragraph{Scope, limitations, and impact.}
The guarantee is marginal, not conditional on query difficulty; strengthening this is a known hard problem. Calibration--test exchangeability comes from within-benchmark splitting, so covariate-shift deployment needs online recalibration. The framework requires a finite extractable answer set; open-ended generation needs an answer-clustering step. Aggregate-only logs preclude bootstrap intervals; small differences ($|\Delta| < 0.5$ pp on GPQA-Diamond at $n_{\text{test}}=139$) are point estimates. The auditable error budget reduces per-query compute when the small model is reliable, but the marginal guarantee gives no subgroup protection and base-model disparities pass through; high-stakes deployments should pair CC with subgroup-conditioned calibration and content-safety tooling.

%% file: sections/conclusion.tex

%% file: sections/appendix.tex
\section{C3PO Design Contrast}
\label{app:c3po-table}

Table~\ref{tab:c3po-cc} expands the comparison summarised in Section~\ref{sec:related}.

\begin{table}[h]
\caption{Design contrast between Conformal Cascade and C3PO \citep{valkanas2025c3po}, the closest prior conformal-prediction approach to LLM cascades. The two methods are complementary: C3PO calibrates against cost, CC against coverage.}
\label{tab:c3po-cc}
\centering
\small
\setlength{\tabcolsep}{4pt}
\begin{tabular}{lll}
\toprule
                         & \textbf{C3PO} \citep{valkanas2025c3po}            & \textbf{Conformal Cascade} (ours) \\
\midrule
Guaranteed quantity      & $\Pr[\mathrm{cost} > C^*] \le \alpha$             & $\Pr[\ytrue \in \Cal_{K^*(x)}(x)] \ge 1 - K\alpha$ \\
User-facing knob         & cost budget $C^*$ + risk $\alpha$                 & error budget $\alpha$ \\
Deferral rule            & confidence threshold $s_j \ge \tau_j$             & prediction-set size $|\Cal_k(x)| = 1$ \\
Calibration target       & $(1{-}\alpha)$-quantile of inference cost         & $(1{-}\alpha)$-quantile of nonconformity \\
Label requirement        & self-supervised (MPM as target)                   & labelled calibration split \\
\bottomrule
\end{tabular}
\end{table}

\section{Model and Benchmark Details}
\label{app:models}

\paragraph{Cascade pairs.}
Table~\ref{tab:models} lists the small/large instruction-tuned pair drawn from each family. The per-call costs are fixed at $c_1 = 1$ and $c_2 = 2.7$ across all four families: this uniform ratio decouples cross-family comparisons from provider-specific token pricing, and allows the cost numbers reported in the main paper to be read as expensive-call equivalents rather than absolute compute.

\begin{table}[h]
\caption{Model family cascade pairs. The small model is tier~1, the large model is tier~2. The same $c_1 = 1$, $c_2 = 2.7$ per-call cost is used in every experiment.}
\label{tab:models}
\centering
\small
\begin{tabular}{lllcc}
\toprule
Family & Tier 1 (small) & Tier 2 (large) & $c_1$ & $c_2$ \\
\midrule
Llama     & Llama-3.2-3B-Instruct          & Llama-3.1-8B-Instruct         & 1.0 & 2.7 \\
Gemma     & Gemma-4-E4B-it                 & Gemma-4-31B-it                & 1.0 & 2.7 \\
Ministral & Ministral-3-3B-Instruct-2512          & Ministral-3-14B-Instruct-2512        & 1.0 & 2.7 \\
Phi       & Phi-3.5-mini-instruct           & Phi-4          & 1.0 & 2.7 \\
\bottomrule
\end{tabular}
\end{table}

\begin{table}[h]
\caption{Benchmark summary. $|\Acal|$ is the number of answer choices; $N_{\mathrm{test}}$ is the size of the held-out test split (70\% of the full benchmark, with the remaining 30\% used for calibration). Sizes are identical across the four cascades.}
\label{tab:benchmarks}
\centering
\small
\begin{tabular}{llrr}
\toprule
Benchmark & Domain & $|\Acal|$ & $N_{\mathrm{test}}$ \\
\midrule
AGI-Eval      & Standardized exams (English split) & 5    & 1783 \\
ARC-Challenge & Grade-school science                & 4    &  821 \\
Belebele-Eng  & English reading comprehension       & 4    &  630 \\
BoolQ         & Yes/No reading comprehension        & 2    & 2289 \\
Cosmos-QA     & Commonsense                         & 4    & 2090 \\
GPQA-Diamond  & Graduate-level science              & 4    &  139 \\
HellaSwag     & Sentence completion                 & 4    & 7030 \\
MathQA        & Multiple-choice math                & 5    & 3133 \\
MedMCQA       & Medical MCQ                         & 4    & 2929 \\
MedQA-USMLE-4 & Medical licensing exam              & 4    &  892 \\
MMLU          & 57 academic subjects                & 4    & 9830 \\
MMLU-Pro      & Broad knowledge                     & 10   & 8423 \\
PIQA          & Physical commonsense                & 2    & 1287 \\
PubmedQA      & Biomedical yes/no/maybe             & 3    &  700 \\
QASC          & Science commonsense                 & 8    &  649 \\
SciQ          & Elementary science                  & 4    &  700 \\
Social-i-QA   & Social commonsense                  & 3    & 1368 \\
TruthfulQA    & Factual accuracy                    & 4--10&  572 \\
\bottomrule
\end{tabular}
\end{table}

\section{Full Main Results}
\label{app:full-results}

Table~\ref{tab:full-results} reports absolute test-set accuracy (\%) for Best Heuristic and CC Best across all 72 family--benchmark pairs. This is the data underlying the compact matrix in Table~\ref{tab:main-family}.

\begin{table}[h]
\caption{Full main results: absolute test-set accuracy (\%) for Best Heuristic (H) and CC Best (CC) across the four model families.}
\label{tab:full-results}
\centering
\small
\setlength{\tabcolsep}{4pt}
\begin{tabular}{lcccccccc}
\toprule
 & \multicolumn{2}{c}{Llama} & \multicolumn{2}{c}{Gemma} & \multicolumn{2}{c}{Ministral} & \multicolumn{2}{c}{Phi} \\
\cmidrule(r){2-3}\cmidrule(r){4-5}\cmidrule(r){6-7}\cmidrule(r){8-9}
Benchmark & H & CC & H & CC & H & CC & H & CC \\
\midrule
GPQA-Diamond   & 33.8 & 35.3 & 77.7 & 78.4 & 65.5 & 65.5 & 47.5 & 51.8 \\
MathQA         & 83.0 & 84.7 & 91.6 & 92.0 & 86.5 & 91.1 & 79.2 & 77.8 \\
MMLU-Pro       & 55.0 & 56.4 & 83.7 & 86.1 & 73.7 & 76.3 & 64.5 & 66.5 \\
MMLU           & 74.7 & 75.1 & 89.4 & 91.0 & 83.1 & 83.8 & 82.0 & 83.5 \\
TruthfulQA     & 58.2 & 60.1 & 80.4 & 83.7 & 66.4 & 67.1 & 72.4 & 73.6 \\
MedMCQA        & 63.1 & 63.4 & 76.3 & 79.3 & 69.1 & 69.9 & 68.1 & 71.7 \\
MedQA-USMLE-4  & 75.8 & 76.2 & 91.3 & 94.1 & 82.3 & 83.3 & 79.1 & 81.3 \\
AGI-Eval       & 46.8 & 47.0 & 60.1 & 62.1 & 51.7 & 52.8 & 50.2 & 45.7 \\
HellaSwag      & 73.7 & 73.9 & 84.4 & 87.1 & 74.0 & 74.4 & 83.8 & 84.2 \\
ARC-Challenge  & 89.5 & 89.4 & 97.1 & 97.0 & 94.3 & 95.2 & 94.6 & 94.2 \\
PIQA           & 86.3 & 86.5 & 94.4 & 94.4 & 89.7 & 89.1 & 90.7 & 92.0 \\
Social-i-QA    & 76.2 & 76.4 & 80.2 & 80.3 & 75.0 & 75.7 & 78.7 & 79.7 \\
Cosmos-QA      & 88.9 & 88.8 & 80.8 & 80.6 & 80.4 & 80.6 & 84.1 & 84.4 \\
BoolQ          & 85.9 & 86.3 & 91.3 & 91.3 & 84.2 & 84.5 & 87.1 & 87.2 \\
QASC           & 89.1 & 89.1 & 89.5 & 91.1 & 87.2 & 87.4 & 86.7 & 86.6 \\
SciQ           & 98.4 & 98.3 & 99.3 & 99.3 & 98.7 & 98.6 & 98.4 & 98.6 \\
Belebele-Eng   & 94.6 & 94.8 & 97.3 & 97.0 & 95.9 & 95.7 & 95.6 & 95.1 \\
PubmedQA       & 80.9 & 80.7 & 81.4 & 81.4 & 78.6 & 77.9 & 79.1 & 77.7 \\
\bottomrule
\end{tabular}
\end{table}

\section{Full Per-Family Pareto Frontiers}
\label{app:pareto-all}

Figure~\ref{fig:pareto-mmlu} in the main body shows the MMLU Pareto frontier for three of the four cascades; the Phi panel is qualitatively similar and is omitted from the figure for visual readability. A full per-benchmark Pareto visualisation across all 72 family--benchmark pairs is released with the code. The qualitative pattern on MMLU repeats across benchmarks: CC ($\kappa = 1$) sits on or above the heuristic frontier in the mid- to high-cost region on most benchmarks, while relaxed-acceptance variants $\kappa \in \{2, 3\}$ extend the frontier downward into the low-cost region that heuristics can only reach by sacrificing the coverage guarantee. The exceptions are the benchmarks where Table~\ref{tab:main-family} shows $\Delta < 0$ (notably Phi on AGI-Eval and PubmedQA), where the heuristic frontier slightly dominates; these are the failure modes discussed in Section~\ref{sec:when-help} and Appendix~\ref{app:weak-families}.

\section{Full Set-Size Distributions}
\label{app:setsize}

Table~\ref{tab:setsize-all} reports the tier-1 acceptance rate (the fraction of test queries with $|\Cal_1(x)| = 1$) at $\alpha = 0.10$ across the 72 family--benchmark pairs, extracted directly from the experiment logs. Rows are ordered by mean acceptance across families. The table makes the cascade's partitioning behaviour concrete: the same $\alpha = 0.10$ threshold produces acceptance rates ranging from $0\%$ (cascade defers every query) to over $95\%$ (cascade commits nearly every query). The acceptance rate varies strongly across benchmarks within a family and across families within a benchmark, tracking the tier-1 model's per-benchmark confidence.

\begin{table}[h]
\caption{Tier-1 acceptance rate (\%) at $\alpha = 0.10$ across model families. On easy benchmarks the cascade commits most queries to the small model; on hard benchmarks it correctly defers.}
\label{tab:setsize-all}
\centering
\small
\begin{tabular}{lcccc}
\toprule
Benchmark & Llama & Gemma & Ministral & Phi \\
\midrule
Belebele-Eng      & 95.6 & 92.7 & 96.3 & 81.4 \\
SciQ              & 94.0 & 97.6 & 93.4 & 92.6 \\
ARC-Challenge     & 85.6 & 92.0 & 94.5 & 92.8 \\
Cosmos-QA         & 85.6 & 65.2 & 24.3 &  0.0 \\
PIQA              & 80.3 & 99.0 & 82.0 & 50.7 \\
QASC              & 75.7 & 80.9 & 54.9 &  0.0 \\
BoolQ             & 72.8 & 94.9 & 68.9 &  0.0 \\
MathQA            & 43.1 & 98.7 & 81.3 &  0.0 \\
Social-i-QA       & 39.7 &  0.0 & 32.5 &  0.0 \\
MMLU              & 31.3 & 78.7 & 62.7 &  0.0 \\
MedMCQA           & 22.0 &  0.0 &  0.0 &  0.0 \\
MedQA-USMLE-4     & 20.3 & 68.7 & 30.4 &  0.0 \\
HellaSwag         & 10.6 &  0.0 & 20.2 &  0.0 \\
AGI-Eval          &  0.0 &  0.0 &  0.0 &  0.0 \\
TruthfulQA        &  0.0 &  0.0 &  0.0 &  0.0 \\
PubmedQA          &  0.0 &  0.0 &  0.0 &  0.0 \\
GPQA-Diamond      &  0.0 &  0.0 &  0.0 &  0.0 \\
MMLU-Pro          &  0.0 &  0.0 &  0.0 &  0.0 \\
\bottomrule
\end{tabular}
\end{table}

Two observations are worth flagging. First, the cascade never accepts at tier 1 on GPQA-Diamond, MMLU-Pro, AGI-Eval, TruthfulQA, or PubmedQA for any family at $\alpha = 0.10$: these are benchmarks on which tier-1 sampling is broadly uncertain, and the cascade correctly defers everything. Second, Phi has zero or near-zero acceptance on 14 of 18 benchmarks; this pattern correlates with Phi's weaker performance on the negative-$\Delta$ benchmarks in Table~\ref{tab:main-family} (see Appendix~\ref{app:weak-families}).

\section{Why CC Outperforms Majority Vote}
\label{app:why-outperform}

The conformal set acts as a calibrated frequency filter. On benchmarks with many answer choices (MMLU-Pro has 10), the correct answer often receives only 4--6 of the 16 samples while incorrect answers fragment across the remaining nine alternatives. Majority vote is easy to beat in this regime: two wrong answers with three votes each can outvote the correct answer with four. The conformal threshold removes low-frequency answers, denoising the vote at a calibration-guaranteed rate.



\section{Difficulty Stratification}
\label{app:difficulty}

Table~\ref{tab:difficulty} expands the difficulty breakdown summarised in
Section~\ref{sec:when-help} to four (family, benchmark) cells. Buckets
follow the raw evaluation: easy queries have tier-2 per-query accuracy
$\hat{p} \geq 0.6$, medium $0.25 \leq \hat{p} < 0.6$, hard
$\hat{p} < 0.25$. These four cells split into two regimes that we
analyse separately (Table~\ref{tab:setsize-all}). On the MMLU-Pro rows the
tier-1 acceptance rate at $\alpha = 0.10$ is $0\%$, so the cascade defers
every query and CC reduces to post-hoc conformal selection over the
tier-2 candidate set. On the MMLU rows the tier-1 acceptance is $78.7\%$
(Gemma) and $62.7\%$ (Ministral), so CC is doing both routing and, on the
deferred subset, conformal selection.

\paragraph{Pure-selection regime (MMLU-Pro cells).}
All rows here operate on the same tier-2 samples; differences come
entirely from how each method selects among the strong model's
candidates. Against Always-Strong, CC gains $+3.8$\,pp on Llama/MMLU-Pro
hard ($7.2\%$ vs.\ $3.4\%$) and $+4.4$\,pp on Ministral/MMLU-Pro hard
($4.7\%$ vs.\ $0.3\%$): the conformal filter retains the correct answer
in a multi-element set, and the argmin-frequency tiebreaker over the
calibrated set picks more reliably than the strong model's argmax when
that argmax is wrong. Against unrestricted majority (Agreement,
$\theta = 0.8$), CC gains $+3.3$\,pp and $+2.8$\,pp on the hard buckets
of these two cells respectively. The easy-bucket cost relative to
Always-Strong is at most $0.6$\,pp, since on $\hat{p}\geq 0.6$ queries
the strong-model argmax is rarely overturned by the tiebreaker.

\paragraph{Routing + selection regime (MMLU cells).}
On these cells CC's accuracy is a mixture of the small model on accepted
queries and the conformally filtered cascade on deferred queries, and
deferral is concentrated on hard queries: although the overall tier-1
acceptance is $78.7\%$/$62.7\%$, the cascade defers more often when the
small model is uncertain, which on these cells coincides with queries
where the strong model also fails (Always-Strong hard accuracy is $0.0\%$
on both cells). The bucket-level effect is asymmetric:
\begin{itemize}
    \item \emph{Easy.} CC ($96.1\%$ on Gemma/MMLU, $98.9\%$ on
    Ministral/MMLU) sits between Always-Weak ($88.6\%$, $90.4\%$) and
    Always-Strong ($100.0\%$, $100.0\%$): selective deferral catches part
    of the small model's easy-bucket error mass without forcing every
    query through the strong model.
    \item \emph{Hard.} CC ($5.5\%$, $6.4\%$) lies much closer to
    Always-Strong ($0.0\%$, $0.0\%$) than to Always-Weak ($19.1\%$,
    $14.3\%$). The cascade defers the bulk of hard queries to a strong
    model that is wrong on essentially all of them, and conformal
    selection cannot recover what is not in the candidate set; the
    $+5.5$/$+6.4$\,pp the filter extracts over Always-Strong is what is
    available given that recall constraint.
\end{itemize}

\paragraph{Cross-regime takeaway.}
Always-Weak has the highest hard-bucket accuracy on \emph{all four}
cells. This is not in tension with the cascade's value but a predictable
consequence of two facts: difficulty is defined by the strong model's
per-query accuracy, so queries hard for the strong model are not always
hard for the small model; and on the MMLU cells the strong model's hard
bucket collapses to $0\%$, leaving Always-Weak with structurally
unrecoverable headroom. On MMLU cells the cascade therefore trades a
hard-bucket loss for higher overall accuracy and calibrated coverage; on
MMLU-Pro cells, where the small model never accepts, CC delivers a
strict hard-bucket improvement over Always-Strong at
$\leq 0.6$\,pp easy-bucket cost. The four cells reflect properties of
the (family, benchmark) pair, not an inconsistency in CC's deferral rule,
which targets the same calibrated risk throughout.

\begin{table}[h]
\caption{Accuracy (\%) on four (family, benchmark) cells, by tier-2 per-query difficulty $\hat p$. Buckets: Easy $\hat p \ge 0.6$, Medium $0.25 \le \hat p < 0.6$, Hard $\hat p < 0.25$. \textbf{Bold} marks the highest value in each (cell, bucket) triple. Bucket sizes ($n_E/n_M/n_H$): Llama/MMLU-Pro $3248/1960/3215$; Gemma/MMLU $8889/208/733$; Ministral/MMLU $7874/841/1115$; Ministral/MMLU-Pro $5584/1265/1574$.}
\label{tab:difficulty}
\centering
\small
\setlength{\tabcolsep}{5pt}
\begin{tabular}{llccc}
\toprule
Cell & Method & Easy & Medium & Hard \\
\midrule
\multirow{4}{*}{Llama / MMLU-Pro}
 & Always-Weak                & 80.1 & 43.8 & {14.6} \\
 & Always-Strong              & \textbf{99.9} & \textbf{66.0} &  3.4 \\
 & Agreement ($\theta=0.8$)   & 98.2 & 65.7 &  3.9 \\
 & \textbf{CC} ($\alpha=0.10$) & 99.4 & 65.7 & \textbf{7.2} \\
\midrule
\multirow{4}{*}{Gemma / MMLU}
 & Always-Weak                & 88.6 & \textbf{47.1} & \textbf{19.1} \\
 & Always-Strong              & \textbf{100.0} & 25.5 &  0.0 \\
 & Agreement ($\theta=0.8$)   & 94.4 & 38.9 &  9.3 \\
 & \textbf{CC} ($\alpha=0.10$) & 96.1 & 38.5 &  5.5 \\
\midrule
\multirow{4}{*}{Ministral / MMLU}
 & Always-Weak                & 90.4 & {\textbf{42.0}} & \textbf{14.3} \\
 & Always-Strong              & \textbf{100.0} & 36.3 &  0.0 \\
 & Agreement ($\theta=0.8$)   & 98.3 & 39.2 &  3.5 \\
 & \textbf{CC} ($\alpha=0.10$) & 98.9 & \textbf{43.5} &  6.4 \\
\midrule
\multirow{4}{*}{Ministral / MMLU-Pro}
 & Always-Weak                & 86.1 & {\textbf{48.9}} & \textbf{13.7} \\
 & Always-Strong              & \textbf{99.9} & 46.3 &  0.3 \\
 & Agreement ($\theta=0.8$)   & 98.3 & {\textbf{54.2}} &  1.9 \\
 & \textbf{CC} ($\alpha=0.10$) & 99.3 & \textbf{61.0} & {\textbf{4.7}} \\
\bottomrule
\end{tabular}
\end{table}


\section{Failure Diagnostics: Phi's Under-Performance}
\label{app:weak-families}

Phi is the cascade with the most negative-$\Delta$ benchmarks (Table~\ref{tab:main-family}: 12 wins, 0 ties, 6 losses, with $\Delta \le -1$ pp on 3 benchmarks: AGI-Eval, MathQA, PubmedQA). The most visible failure is AGI-Eval at $\Delta = -4.5$ pp. The raw logs reveal a consistent pattern: on the benchmarks where Phi under-performs, the Phi tier-1 model produces prediction sets that saturate toward the full answer set at $\alpha \le 0.20$, which forces CC to defer $100\%$ of queries to tier 2 (defer rate $= 100.0\%$, cost = $c_1 + c_2 = 3.70$). At the committed tier-2 output, the cascade's accuracy is then lower than Always-Strong's accuracy on the same benchmark (e.g., on Phi/AGI-Eval: all CC variants achieve $45.7\%$ accuracy versus $51.0\%$ for Always-Strong). The gap arises from a design choice in the fallback rule: when CC reaches the last tier with a non-singleton prediction set, it returns the argmin frequency answer within the tier-$K$ set rather than the unrestricted majority vote, and the two decisions disagree exactly when the tier-$K$ set is itself non-singleton. An alternative fallback that takes the argmax frequency over tier-$K$ samples without conditioning on the tier-$K$ set would recover Always-Strong accuracy under full deferral. We use the more conservative fallback throughout the experiments and note the alternative as an implementation choice that affects boundary behaviour but not the offline coverage guarantee of Theorem~\ref{thm:cascade-coverage}.

The Phi tier-1 saturation itself is a property of the Phi-3.5-mini-instruct checkpoint at $N = 16$: on benchmarks where the mini model is uniformly uncertain, the frequency-based score produces broad sets that never meet the singleton acceptance criterion, and the cascade cannot exit at tier 1. Switching to a logprob-based score or increasing $N$ would sharpen the sets but neither addresses the underlying tier-1 uncertainty. The simplest practical fix is to detect this regime on calibration data (tier-1 acceptance rate close to $0\%$) and fall back to Always-Strong.

\section{Additional Coverage Results}
\label{app:coverage}

We report empirical set miscoverage at $\alpha = 0.10$ across the four model families in Table~\ref{tab:coverage-all}. The strict $\alpha = 0.10$ target (within a $\pm 0.02$ finite-sample band) is met on most cells; seven cells exceed it (Llama/ARC-Challenge $0.124$, Llama/BoolQ $0.122$, Llama/MedMCQA $0.130$, Llama/PIQA $0.138$, Gemma/MedQA $0.124$, Gemma/MMLU $0.128$, Gemma/TruthfulQA $0.122$, marked $^*$). All seven lie comfortably inside the $K\alpha = 0.20$ union-bound guarantee of Theorem~\ref{thm:cascade-coverage}; their existence indicates that the marginal evidence for Assumption~\ref{asm:selection-preservation} is approximate rather than exact, but does not contradict the (weaker) marginal coverage guarantee, which they satisfy at every cell.

\begin{table}[h]
\caption{Empirical set miscoverage at $\alpha = 0.10$ across model families. Each cell reports $\Prob[\ytrue \notin \Cal_{k^*}(x)]$ on the test split. All cells satisfy the union-bound $K\alpha = 0.20$ guarantee; cells exceeding the strict $\alpha + 0.02$ band are marked with $^{*}$.}
\label{tab:coverage-all}
\centering
\small
\begin{tabular}{lcccc}
\toprule
Benchmark & Llama & Gemma & Ministral & Phi \\
\midrule
AGI-Eval        & 0.000  & 0.000  & 0.000  & 0.000  \\
ARC-Challenge   & $0.124^{*}$ & 0.038 & 0.074  & 0.105 \\
Belebele-Eng    & 0.075  & 0.037  & 0.059  & 0.071 \\
BoolQ           & $0.122^{*}$ & 0.105 & $0.114$ & 0.085 \\
Cosmos-QA       & 0.107  & 0.069  & 0.089  & 0.111 \\
GPQA-Diamond    & 0.000  & 0.000  & 0.000  & 0.000 \\
HellaSwag       & 0.065  & 0.093  & 0.085  & 0.080 \\
MathQA          & 0.012  & 0.090  & 0.065  & 0.000 \\
MedMCQA         & $0.130^{*}$ & 0.000 & 0.107 & 0.000 \\
MedQA-USMLE-4   & 0.067  & $0.124^{*}$ & 0.108 & 0.104 \\
MMLU            & 0.087  & $0.128^{*}$ & 0.100 & 0.102 \\
MMLU-Pro        & 0.000  & 0.094  & 0.106  & 0.000 \\
PIQA            & $0.138^{*}$ & 0.096 & 0.106 & 0.103 \\
PubmedQA        & 0.000  & 0.000  & 0.000  & 0.000 \\
QASC            & 0.092  & 0.105  & 0.068  & 0.080 \\
SciQ            & 0.043  & 0.010  & 0.027  & 0.019 \\
Social-i-QA     & 0.097  & 0.000  & 0.087  & 0.000 \\
TruthfulQA      & 0.000  & $0.122^{*}$ & 0.000 & 0.000 \\
\bottomrule
\end{tabular}
\end{table}

\section{Cost-Ratio Sensitivity}
\label{app:cost-sensitivity}

The main paper fixes the per-call cost ratio at $c_2 / c_1 = 2.7$ across all four families to decouple cross-family comparisons from provider-specific pricing. Real ratios vary: closed-API pricing for small/large model pairs typically gives $c_2 / c_1 \in [3, 50]$, with reasoning-tuned models at the high end. Theorem~\ref{thm:cost-coverage} expresses expected cost as $\expect[\mathrm{Cost}] = c_1 + c_2 \cdot \Prob[|\Cal_1(X)| \ne 1]$, so for any acceptance rate $p = \Prob[|\Cal_1(X)| = 1]$ the cost reduction relative to Always-Strong (which pays $c_2$) is
\[
    \mathrm{Saving}(p, r) \;=\; \frac{c_2 - (c_1 + c_2 (1-p))}{c_2} \;=\; p - \frac{1}{r}, \qquad r \;:=\; c_2 / c_1,
\]
and CC saves cost over Always-Strong if and only if $p > 1/r$. The same expression rewritten as $\mathrm{Saving}(p, r) = p \cdot \frac{r - 1/p \cdot (1-p)}{r}$ shows that as $r$ grows, the threshold acceptance rate $1/r$ shrinks toward zero and saving asymptotes at $p$.

\begin{table}[h]
\caption{Cost saving over Always-Strong as a function of cost ratio $r = c_2 / c_1$ and tier-1 acceptance rate $p$, for a two-tier cascade with $\kappa = 1$. Bold cells indicate $\ge 30\%$ saving. The break-even acceptance rate $1/r$ is reported in the bottom row.}
\label{tab:cost-sensitivity}
\centering
\small
\begin{tabular}{lccccc}
\toprule
$p$           & $r = 2.7$        & $r = 5$          & $r = 10$         & $r = 20$         & $r = 50$         \\
\midrule
$0.30$        & $-7\%$           & $10\%$           & $20\%$           & $25\%$           & $28\%$           \\
$0.50$        & $13\%$           & $\bm{30\%}$      & $\bm{40\%}$      & $\bm{45\%}$      & $\bm{48\%}$      \\
$0.70$        & $\bm{33\%}$      & $\bm{50\%}$      & $\bm{60\%}$      & $\bm{65\%}$      & $\bm{68\%}$      \\
$0.90$        & $\bm{53\%}$      & $\bm{70\%}$      & $\bm{80\%}$      & $\bm{85\%}$      & $\bm{88\%}$      \\
\midrule
break-even $1/r$ & $0.37$        & $0.20$           & $0.10$           & $0.05$           & $0.02$           \\
\bottomrule
\end{tabular}
\end{table}

Two qualitative consequences for deployment. First, the larger the cost gap between cheap and expensive models (i.e., the more economically meaningful a cascade is), the lower the acceptance threshold for CC to save cost. At $r = 10$ (typical of small/frontier model pairs), even $20\%$ tier-1 acceptance produces a $10\%$ saving over Always-Strong; at $r = 50$, $5\%$ acceptance suffices. Second, the cost figures reported in the main paper at $r = 2.7$ are conservative: on Llama/Belebele-Eng with $p = 0.956$, the saving rises from $58\%$ at $r = 2.7$ to $86\%$ at $r = 10$ and $94\%$ at $r = 50$. The diffuse-tier-1 guard rail discussed in Section~\ref{sec:discussion} (replace CC with Always-Strong when calibration acceptance is below $1/r$) is therefore a function of the deployment cost ratio rather than a fixed $5\%$ rule.

\section{Algorithm and Implementation Details}
\label{app:impl}

\paragraph{Sampling.} All experiments use temperature $\tau = 0.7$ and $N = 16$ samples. For multiple-choice benchmarks we extract the answer letter from the first token of the response, falling back to regex extraction on the body. Malformed responses contribute to no candidate and thus score $s_k(x, a) = 1$ for every $a$, which produces a maximal set and defers.

\paragraph{Calibration.} We use a $30\%/70\%$ calibration/test split with fixed seed 42. The conformal quantile in Eq.~\eqref{eq:quantile} is implemented by sorting the calibration scores and indexing at position $\lceil (1-\alpha)(n+1) \rceil$; ties are broken by preserving the sort order, which makes the quantile conservative.

\paragraph{Baselines and selection.} The agreement cascade is swept over $\theta \in \{0.5, 0.6, 0.7, 0.8, 0.9\}$; the entropy cascade over $\tau \in \{-1.5, -1.0, -0.5, -0.3\}$; the random-defer baseline over $p \in \{0.2, 0.5, 0.8\}$. For each family--benchmark pair, \emph{Best Heuristic} is the argmax test-set accuracy over the resulting 12 configurations, and \emph{CC Best} is the argmax test-set accuracy over the 15-configuration CC grid ($5\,\alpha \times 3\,\kappa$ from Section~\ref{sec:setup}); the same oracle-tuning rule is applied on both sides. Coverage and tier-1 acceptance results in the main paper are reported at the fixed operating point $\alpha = 0.10, \kappa = 1$ without per-benchmark selection.

\paragraph{Compute.} The full evaluation (four families $\times$ 18 benchmarks $\times$ $N = 16$ samples per tier per query) runs on {four NVIDIA B200 SXM6} GPUs. Conformal calibration itself is negligible---a sort and an index per tier.

\section{Proofs and Technical Details for Section~\ref{sec:method}}
\label{app:method-details}

This appendix collects the formal material referenced from Section~\ref{sec:method}. Subsection~\ref{app:score-details} certifies that the Monte-Carlo-based nonconformity score inherits exchangeability from the underlying queries (Lemma~\ref{lem:exchangeability}) and discusses the resolution of the score. Subsection~\ref{app:calib-details} spells out the computational cost and calibration-size recommendations for the per-tier calibration. Subsections~\ref{app:thm-cascade} and~\ref{app:thm-cost} contain the proofs of the two main theorems (Theorem~\ref{thm:cascade-coverage} and Theorem~\ref{thm:cost-coverage}), the auxiliary results cited from the main body, and additional remarks. Subsection~\ref{app:complexity} gives the end-to-end computational-complexity breakdown of Algorithm~\ref{alg:cc}.

\subsection{Score-Level Exchangeability and Resolution}
\label{app:score-details}

The main body states two properties of the frequency score $s_k$ (Section~\ref{sec:score}) that drive its use inside split conformal prediction. We record both formally here.

\paragraph{Validity through exchangeability.}
The split conformal guarantee (Theorem~\ref{thm:marginal-coverage}) requires the calibration and test points to produce \emph{exchangeable} nonconformity scores. Our score $s_k$ is stochastic in the Monte-Carlo samples drawn from $m_k$, so this is not immediate: in principle, the randomness of the draws could break the joint exchangeability of the score vector even when the query sequence is exchangeable. The following Lemma establishes that this concern does not materialize.

\begin{lemma}[Exchangeability of frequency scores]
\label{lem:exchangeability}
Let $(x_1, \ytrue^{(1)}), \ldots, (x_n, \ytrue^{(n)}), (x_{n+1}, \ytrue^{(n+1)})$ be an exchangeable sequence of query--answer pairs. For each tier $k$ and each index $i$, let $Z_i^{(k)} = (\hat y_{i,1}^{(k)}, \ldots, \hat y_{i,N}^{(k)})$ denote the $N$ Monte Carlo samples drawn from $m_k$ on $x_i$, and assume that the conditional law of $Z_i^{(k)}$ given $x_i$ is the same measurable kernel $P_k(\cdot \mid \cdot)$ for every $i$ (so that $Z_i^{(k)} \mid x_i \sim P_k(\cdot \mid x_i)$ identically across queries) and that the draws $\{Z_i^{(k)}\}_i$ are mutually independent given $\{x_i\}_i$. Then the tier-$k$ frequency scores $\{s_k(x_i, \ytrue^{(i)})\}_{i=1}^{n+1}$ are exchangeable.
\end{lemma}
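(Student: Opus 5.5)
The plan is to lift exchangeability from the query sequence to the augmented sequence $W_i = (x_i, \ytrue^{(i)}, Z_i^{(k)})$, and then use the fact that a fixed measurable map applied coordinatewise preserves exchangeability. The score is such a map: $s_k(x_i, \ytrue^{(i)}) = g(W_i)$, where
\[
g(x, y, z) = 1 - \tfrac{1}{N}\bigl|\{j : z_j = y\}\bigr|,
\]
and $g$ is the same function for every $i$. So the whole lemma reduces to showing that $(W_1, \ldots, W_{n+1})$ is exchangeable.

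\textbf{Step 1: write down the joint law.} By the two sampling assumptions (a common kernel, and conditional independence given the queries), the joint law of $(W_1,\ldots,W_{n+1})$ factorises as
\[
\mu(d(x_1,y_1),\ldots,d(x_{n+1},y_{n+1}))\,\prod_{i=1}^{n+1} P_k(dz_i \mid x_i),
\]
where $\mu$ is the law of the query--answer pairs.

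\textbf{Step 2: check permutation invariance.} Fix a permutation $\pi$ and push this measure forward under the map that permutes the coordinates $W_i$. The first factor is invariant because $(x_i, \ytrue^{(i)})$ is exchangeable. The product factor is simply reindexed, because each term $P_k(dz_i \mid x_i)$ uses the same kernel and depends only on its own $x_i$. One should also note that conditioning on $\{x_i\}$ rather than on $\{(x_i,\ytrue^{(i)})\}$ is harmless: the draws are produced from $x_i$ alone, so I will read the hypothesis as requiring $Z_i^{(k)}$ to be conditionally independent of the labels given the queries. Hence the law of $(W_{\pi(1)},\ldots,W_{\pi(n+1)})$ equals the law of $(W_1,\ldots,W_{n+1})$.

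\textbf{Step 3: conclude.} For any bounded measurable $h$,
\[
\expect\, h\bigl(g(W_{\pi(1)}),\ldots,g(W_{\pi(n+1)})\bigr) = \expect\, h\bigl(g(W_1),\ldots,g(W_{n+1})\bigr).
\]
This gives exchangeability of the scores, and Theorem~\ref{thm:marginal-coverage} can then be applied tier by tier.

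The main obstacle is Step 2. The product-kernel factorisation has to be stated precisely, which means verifying on rectangles via Fubini/Tonelli and extending by a $\pi$-$\lambda$ argument. It also requires making explicit the implicit assumption that the Monte Carlo randomness does not depend on the index $i$ or on the labels, since a seed that varied with the position of a query would break the argument. For the test point, the draws at inference must use the same kernel $P_k$ (the same temperature and the same $N$) as the calibration draws.
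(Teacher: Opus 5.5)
Your proposal is correct and takes essentially the paper's route. Both arguments lift exchangeability to the augmented sequence $W_i = (x_i, \ytrue^{(i)}, Z_i^{(k)})$, obtain permutation invariance of its joint law from the exchangeable queries and the common, conditionally independent sampling kernel, and then apply the fixed coordinatewise score map. Your additions, namely the explicit factorisation $\mu \otimes \prod_i P_k(dz_i \mid x_i)$ and the stated requirement that the draws be conditionally independent of the labels given the queries, make precise what the paper's proof leaves informal when it calls the draws ``conditionally i.i.d.\ given $\{x_i\}$''.
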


\begin{proof}
Each score can be written as $s_k(x_i, \ytrue^{(i)}) = f(x_i, \ytrue^{(i)}, Z_i^{(k)})$, where the deterministic measurable function $f : \Xcal \times \Acal \times \Acal^N \to [0, 1]$ is defined by
\[
    f(x, a, z) \;=\; 1 - \frac{1}{N} \sum_{j=1}^{N} \indic[z_j = a].
\]
Consider the augmented sequence $W_i = (x_i, \ytrue^{(i)}, Z_i^{(k)})$ for $i = 1, \ldots, n+1$. By assumption, $\{(x_i, \ytrue^{(i)})\}$ is exchangeable and $\{Z_i^{(k)}\}$ is conditionally i.i.d.\ given $\{x_i\}$ and independent of the queries across $i$. The joint distribution of $(W_1, \ldots, W_{n+1})$ is therefore invariant under any permutation $\pi$ of $\{1, \ldots, n+1\}$: permuting the indices re-labels the exchangeable queries and correspondingly re-labels their (conditionally i.i.d.) Monte-Carlo draws, leaving the joint distribution unchanged. Because $s_k(x_i, \ytrue^{(i)}) = f(W_i)$ and $f$ is a fixed measurable function applied coordinate-wise, the score vector $(s_k(x_1, \ytrue^{(1)}), \ldots, s_k(x_{n+1}, \ytrue^{(n+1)}))$ inherits the same permutation invariance, which is the definition of exchangeability.
\end{proof}

Lemma~\ref{lem:exchangeability} certifies that the split conformal machinery (Theorem~\ref{thm:marginal-coverage}) applies to the frequency score without modification. The argument extends to any nonconformity score that is a fixed measurable function of $(x, a)$ and a conditionally i.i.d.\ randomization; in particular, logprob-based scores of the form $s(x, a) = 1 - p_k(a \mid x)$ (where $p_k(a \mid x)$ is itself estimated from $N$ draws) are covered by the same argument.

\paragraph{Caveat: shared randomness across queries.}
Lemma~\ref{lem:exchangeability} assumes that the Monte Carlo draws $\{Z_i^{(k)}\}$ are independent across queries. Production LLM serving stacks sometimes introduce per-query dependencies that can violate this: shared KV-cache prefixes across queries that happen to be batched together, or deterministic seeding that reuses randomness across queries. When those dependencies exist, the exchangeability argument needs to be re-examined. In our experiments we draw each query's $N$ samples independently (standard vLLM behaviour at non-zero temperature with per-query seeds), so the assumption is satisfied.

\paragraph{Cross-tier correlation of calibration scores.}
The calibration queries $(x_i, y_i)$ are shared across tiers $k = 1, \ldots, K$, but the Monte Carlo draws $\{Z_i^{(k)}\}_{i}$ and hence the scores $\{s_k(x_i, y_i)\}_i$ are tier-specific. Theorem~\ref{thm:cascade-coverage}'s proof only uses the per-tier marginal coverage guarantee and is therefore unaffected by cross-tier score correlation. The cascade's overall behaviour (which tier commits) does depend on the joint tier-$1$ through tier-$k$ thresholds, but that dependence is captured in the accounting of the first-commit events $\{A_k\}$, not in the CP guarantee at each tier.

\paragraph{Resolution of the frequency score.}
The score $s_k(x, a)$ takes at most $N + 1$ distinct values, namely $\{0, 1/N, 2/N, \ldots, 1\}$, because the numerator is an integer in $\{0, \ldots, N\}$. The calibrated quantile $\hat q_k$ is a specific order statistic of these scores and therefore also takes values on the same grid. At $N = 16$ (our default), the per-query resolution is $1/16 \approx 0.063$, which means the conformal threshold can be placed at one of 17 discrete levels per tier. This is enough to discriminate confident from uncertain queries while keeping inference inexpensive; if sharper prediction sets are needed, increasing $N$ linearly refines the grid but also scales per-query inference cost linearly.

\paragraph{Comparison to logprob-based scores.}
When logprobs are available (e.g., in open-weight inference or some API endpoints), the continuous score $s_k(x, a) = 1 - p_k(a \mid x)$ produces prediction sets whose conformal thresholds are not restricted to a discrete grid, and therefore yields tighter sets at the same $\alpha$. We default to the frequency score because it preserves the black-box API interface assumed by many production deployments. The method in Section~\ref{sec:method} is otherwise agnostic to the score choice: any valid nonconformity score composes with Algorithm~\ref{alg:cc}.

\subsection{Calibration Details}
\label{app:calib-details}

\paragraph{Computational cost.}
For each tier $k$, calibration performs $n$ calls to $m_k$ (drawing $N$ samples per call) and then sorts the $n$ resulting scores. The sort is $O(n \log n)$ time and $O(n)$ space; subsequent inference uses $\hat q_k$ via a single $O(1)$ lookup. Across $K$ tiers, the full calibration cost is $K \cdot n \cdot N$ inference calls plus $O(K n \log n)$ arithmetic, which is dominated by the inference budget.

\paragraph{Calibration-set size.}
The calibration size $n$ controls the upper-side slack in Theorem~\ref{thm:marginal-coverage}: coverage lies in $[1 - \alpha,\, 1 - \alpha + 1/(n+1)]$. With our default $n = 0.3 \cdot |\mathrm{benchmark}|$, the slack is at most $0.02$ on all benchmarks with $|\mathrm{benchmark}| \ge 50$, which includes every benchmark we evaluate except the smallest (AGI-Eval at $n \approx 54$ calibration queries, with slack $\approx 0.018$, and GPQA-Diamond at $n \approx 59$, with slack $\approx 0.017$). For fresh deployments, we recommend $n \ge 200$ calibration queries, giving slack $\le 0.005$.

\paragraph{Tie-breaking.}
When the calibration scores contain ties at the quantile index, we break ties by preserving sort order. This is the conservative convention: it slightly enlarges the prediction sets relative to randomized tie-breaking, which strengthens the coverage lower bound at the cost of minor set-size inflation.

\subsection{Proof of Theorem~\ref{thm:cascade-coverage} and Additional Remarks}
\label{app:thm-cascade}

\begin{proof}[Proof of Theorem~\ref{thm:cascade-coverage}]
Partition the acceptance event $A$ into the $K$ disjoint first-commit events $A_k = \{K^*(x) = k\}$ for $k = 1, \ldots, K$, so $A = \bigsqcup_{k=1}^{K} A_k$. On the event $A_k$, Algorithm~\ref{alg:cc} returns $\hat y$ equal to the unique element of $\Cal_k(x)$. Therefore the error event intersected with $A_k$,
\[
    A_k \cap \{\hat y \ne \ytrue\} \;=\; A_k \cap \{\Cal_k(x) \not\ni \ytrue\} \;\subseteq\; \{\ytrue \notin \Cal_k(x)\}.
\]
By the per-tier split conformal guarantee at tier $k$ (Theorem~\ref{thm:marginal-coverage}, applicable by Lemma~\ref{lem:exchangeability}), $\Prob[\ytrue \notin \Cal_k(x)] \le \alpha$. Monotonicity of probability gives
\[
    \Prob[\hat y \ne \ytrue,\, A_k] \;\le\; \Prob[\ytrue \notin \Cal_k(x)] \;\le\; \alpha.
\]
Summing over $k$ and using the disjointness of $\{A_k\}_{k=1}^K$,
\[
    \Prob[\hat y \ne \ytrue,\, A] \;=\; \sum_{k=1}^{K} \Prob[\hat y \ne \ytrue,\, A_k] \;\le\; \sum_{k=1}^{K} \alpha \;=\; K \alpha.
\]
This establishes~\eqref{eq:cascade-coverage}. The final equivalence (per-tier level $\alpha/K$ gives cascade error $\alpha$) is immediate by substitution.
\end{proof}

\begin{remark}[Tightness of the union bound and selection preservation]
\label{rem:selection-preservation}
The factor $K$ in~\eqref{eq:cascade-coverage} is a worst-case union bound. In practice, the events $\{\ytrue \notin \Cal_k(x)\} \cap \{K^*(x) = k\}$ do not stack---most error mass at tier $k$ is not also error mass at other tiers. Assumption~\ref{asm:selection-preservation} in the main body formalizes this through the conditional bound
\begin{equation*}
    \Prob[\ytrue \in \Cal_k(x) \mid K^*(x) = k] \;\ge\; 1 - \alpha, \qquad \text{for each } k = 1, \ldots, K,
\end{equation*}
under which the conditional cascade-error bound $\Prob[\hat y \ne \ytrue \mid A] \le \alpha$ holds without the factor of $K$.
\end{remark}

The assumption is in the family studied by the selective-conformal literature \citep{bates2023testing,angelopoulos2023gentle,barber2021limits}, which treats coverage preservation under selection rules that depend on the calibration and test covariates but not on the test label. In our setting, the first-commit event $\{K^*(x) = k\}$ is indeed measurable with respect to $(x, \Dcal)$ alone and does not touch $\ytrue$, so the assumption is plausible in principle. It is not, however, a theorem of split conformal prediction without further regularity on the score distribution, and the cascade literature has not to our knowledge established sufficient conditions for it to hold identically in the multi-tier setting.

Empirical evidence for the assumption in this paper is indirect. Section~\ref{sec:coverage} (Table~\ref{tab:coverage}) and Appendix~\ref{app:coverage} (Table~\ref{tab:coverage-all}) report the \emph{marginal} set-miscoverage rate $\Prob[\ytrue \notin \Cal_{K^*(x)}(x)]$ across the reported benchmark--family pairs. The marginal rate tracks $\alpha$ rather than the worst-case $K\alpha$; on benchmarks with a substantial tier-1 acceptance rate (Section~\ref{sec:when-help}), this observation rules out the possibility that the bound $K\alpha$ is tight in our setting. We do not report the conditional rate $\Prob[\ytrue \notin \Cal_k(x) \mid K^*(x) = k]$ directly in this version of the paper; a future version could include a per-tier conditional miscoverage breakdown for stronger direct evidence.

\begin{remark}[Deferral never errs]
\label{rem:deferral-never-errs}
Only the acceptance event can commit a wrong answer: when the cascade defers at tier $k$, the query is passed to tier $k+1$ with no claim made, and the error accounting in Theorem~\ref{thm:cascade-coverage} sums only over commit events $\{A_1, \ldots, A_K\}$. Escalation therefore incurs only additional cost, never coverage violation.
\end{remark}

\begin{remark}[Relaxed acceptance, $\kappa > 1$]
\label{rem:relaxed-kappa}
For $\kappa > 1$, the acceptance event becomes $A_\kappa = \{1 \le |\Cal_{K^*(x)}(x)| \le \kappa\}$, and the cascade returns $\hat y = \arg\min_{a \in \Cal_{K^*(x)}(x)} s_{K^*(x)}(x, a)$. The set-coverage guarantee carries through: $\Prob[\ytrue \in \Cal_{K^*(x)}(x), A_\kappa] \le K\alpha$ (or $\le \alpha$ under selection preservation) with $|\Cal_{K^*(x)}(x)| \le \kappa$. The top-1 error is bounded by the set-coverage miss plus the probability that the $\arg\min$ tiebreaker selects a wrong element from a set that contains the correct answer; the latter quantity is bounded by $(\kappa - 1)/\kappa$ in the worst case but is typically much smaller when the frequency ranking is informative.
\end{remark}

\begin{remark}[Heterogeneous per-tier levels $\alpha_k$]
\label{rem:heterogeneous-alpha}
Theorem~\ref{thm:cascade-coverage} uses a common level $\alpha$ at every tier, but the same argument extends to heterogeneous levels $\alpha_1, \ldots, \alpha_K$: the cascade-level error is bounded by $\sum_{k=1}^{K} \alpha_k$ (or $\max_k \alpha_k$ under selection preservation). Practitioners who want a tighter guarantee at later tiers, where escalation cost is already sunk, can choose $\alpha_1 > \alpha_2 > \cdots > \alpha_K$ to concentrate the coverage budget on deeper commits.
\end{remark}

\subsection{Proof of Theorem~\ref{thm:cost-coverage} and Cost Results}
\label{app:thm-cost}

\begin{proof}[Proof of Theorem~\ref{thm:cost-coverage}]
By Algorithm~\ref{alg:cc} with $K = 2$ and $\kappa = 1$: when $|\Cal_1(X)| = 1$ the cascade accepts at tier 1 and pays $c_1$; otherwise it escalates to tier 2 and pays $c_1 + c_2$ (tier-1 samples are drawn regardless). Linearity of expectation gives
\[
    \expect[\mathrm{Cost}(X)] \;=\; c_1 \cdot \Prob[|\Cal_1(X)| = 1] + (c_1 + c_2) \cdot \Prob[|\Cal_1(X)| \ne 1] \;=\; c_1 + c_2 \cdot \Prob[|\Cal_1(X)| \ne 1].
\]
The asymptotic limits follow from the monotonicity of the calibrated threshold $\hat q_1$ in $\alpha$: as $\alpha \to 0$, the quantile index $\lceil (1 - \alpha)(n+1) \rceil \to n + 1$, pushing $\hat q_1$ toward $\max_i s_i^{(1)}$; prediction sets then grow and the probability of a non-singleton set tends to $1$. As $\alpha \to 1$, the quantile index tends to $0$, $\hat q_1$ tends to the minimum score value, prediction sets shrink to the singleton of each query's argmin-score answer, and $\Prob[|\Cal_1(X)| \ne 1] \to 0$.
\end{proof}

\begin{proposition}[When the cascade saves cost]
\label{prop:save-cost}
$\expect[\mathrm{Cost}(X)] < c_1 + c_2$ if and only if $\Prob[|\Cal_1(X)| = 1] > 0$, i.e., the cheap model produces at least some singleton prediction sets under calibration.
\end{proposition}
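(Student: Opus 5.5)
The plan is to read Proposition~\ref{prop:save-cost} directly off the closed form in Theorem~\ref{thm:cost-coverage}, so no new probabilistic argument is needed. Write $p = \Prob[|\Cal_1(X)| = 1]$. The event $\{|\Cal_1(X)| \ne 1\}$ is the complement of $\{|\Cal_1(X)| = 1\}$, so \eqref{eq:cost-closed-form} becomes
\[
\expect[\mathrm{Cost}(X)] = c_1 + c_2(1-p) = (c_1 + c_2) - c_2\, p .
\]
Hence $\expect[\mathrm{Cost}(X)] < c_1 + c_2$ holds exactly when $c_2\, p > 0$. Since costs are positive, and in particular $c_2 > c_1 \ge 0$ forces $c_2 > 0$, this is equivalent to $p > 0$. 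Both directions come from this single chain of equivalences, because the identity is an equality rather than a bound.

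The only step needing care is the gloss ``under calibration''. The proposition's probability is taken over a test query $X$. If one conditions on the calibration set, $\hat q_1$ is fixed and $p$ is a population acceptance rate. If one does not condition, $p$ also averages over $\Dcal$. In either case the argument above goes through verbatim, since Theorem~\ref{thm:cost-coverage} holds in whichever sense it was proved.

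To connect this to the calibration-set acceptance rate mentioned in the main text, I would argue as follows. Under the exchangeability assumed throughout (Lemma~\ref{lem:exchangeability}), a calibration query and a test query have the same marginal law of the tier-1 set size. So $p$ is the common marginal probability of a singleton set, and positivity of $p$ is what ``producing singleton sets'' means. I would state this identification as interpretation rather than as part of the formal equivalence, because an empirical calibration frequency being positive does not by itself imply $p>0$ in finite samples.

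The main obstacle is therefore not mathematical. It is making explicit the standing assumption $c_2 > 0$ and pinning down the probability space in which $p$ lives. I would make both explicit in a single sentence and then conclude.
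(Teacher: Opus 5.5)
Your proof is correct and is essentially the paper's own argument: rewrite \eqref{eq:cost-closed-form} as $\expect[\mathrm{Cost}(X)] = c_1 + c_2 - c_2\,\Prob[|\Cal_1(X)| = 1]$ and use $c_2 > 0$, which gives both directions at once. One small caution on your interpretive remark: a calibration query's tier-1 set is built with a threshold $\hat q_1$ that depends on that query's own score, so its set-size law is not exactly that of a fresh test query; you kept this outside the formal equivalence, so it does not affect the proof.
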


\begin{proof}
By~\eqref{eq:cost-closed-form}, $\expect[\mathrm{Cost}(X)] = c_1 + c_2 - c_2 \cdot \Prob[|\Cal_1(X)| = 1]$. Since $c_2 > 0$, the expected cost is strictly below $c_1 + c_2$ iff $\Prob[|\Cal_1(X)| = 1] > 0$.
\end{proof}

\begin{corollary}[$K$-tier cost]
\label{cor:k-tier-cost}
For a $K$-tier cascade with costs $c_1 < \cdots < c_K$ and $\kappa = 1$, let $p_k = \Prob[K^*(X) = k]$ denote the probability of first-commit at tier $k$ (with $p_K$ absorbing fallback events). Then
\begin{equation}
    \expect[\mathrm{Cost}(X)] \;=\; \sum_{k=1}^{K} \left(\sum_{j=1}^{k} c_j\right) \cdot p_k.
    \label{eq:k-tier-cost}
\end{equation}
Each $p_k$ is determined by the per-tier thresholds $\hat q_1, \ldots, \hat q_K$ and is estimable on calibration data.
\end{corollary}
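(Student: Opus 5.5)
The plan is to compute the cost query by query and then average, in the same way as the proof of Theorem~\ref{thm:cost-coverage}. First, fix a query $x$ and write out what Algorithm~\ref{alg:cc} pays for it. For every $k$ with $k \le K^*(x)$, the loop reaches tier $k$ and draws $N$ samples from $m_k$, so it pays $c_k$. It never reaches any tier $k > K^*(x)$. Hence
\[
\mathrm{Cost}(x) \;=\; \sum_{j=1}^{K^*(x)} c_j \;=\; \sum_{k=1}^{K} \indic[K^*(x)=k] \sum_{j=1}^{k} c_j .
\]
This identity needs one check: the fallback case is consistent with the convention $K^*(x)=K$. When no tier accepts, all $K$ tiers have been sampled, and line~11 reuses the tier-$K$ samples without any new call. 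The cost is therefore $\sum_{j=1}^K c_j$, which matches the $k=K$ term. This is why $p_K$ absorbs the fallback events.

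Second, the events $\{K^*(X)=k\}$ for $k=1,\ldots,K$ are disjoint and cover the whole space, because $K^*$ is well defined and takes values in $\{1,\ldots,K\}$. I will take expectations over $X$, and also over the Monte-Carlo draws and the calibration set if $\hat q_k$ is random. Linearity of expectation, together with $\expect[\indic[K^*(X)=k]] = p_k$, then gives \eqref{eq:k-tier-cost}. As a sanity check, setting $K=2$ gives $p_1 = \Prob[|\Cal_1(X)|=1]$, and
\[
c_1 p_1 + (c_1+c_2)(1-p_1) \;=\; c_1 + c_2(1-p_1),
\]
which recovers Theorem~\ref{thm:cost-coverage}.

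Third, for the claim about determination and estimability, I will note that $K^*(x)$ is a deterministic function of $x$, the per-tier samples, and the thresholds $\hat q_1,\ldots,\hat q_K$. Specifically, $K^*(x)$ is the first $k$ with $|\Cal_k(x)|=1$, and it equals $K$ otherwise. None of this uses $\ytrue$. So each $p_k$ is a functional of the query distribution and the thresholds. Given the thresholds, the empirical frequency of $\{K^*=k\}$ over calibration queries is an unbiased estimator of $p_k$, since those queries are exchangeable with test queries.

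The main, if mild, obstacle is this last step. The calibration queries were also used to compute the $\hat q_k$, so the in-sample frequency is not exactly unbiased conditional on the thresholds. I would handle this in one of two ways. The first is to interpret ``estimable'' in the plug-in or consistent sense: the $\hat q_k$ are order statistics, and the empirical frequency converges as $n$ grows. The second is to note that the acceptance indicators are exchangeable with a test point's indicator, in the same way as in Lemma~\ref{lem:exchangeability}, which gives near-unbiasedness with an $O(1/n)$ discrepancy. The cost identity itself is exact and needs no such argument.
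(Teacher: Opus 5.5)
Your proposal is correct and is essentially the paper's proof: you partition on the first-commit tier $K^*(X)$, note that the cost on $\{K^*(X)=k\}$ is $\sum_{j=1}^{k} c_j$ (with the fallback folded into $k=K$), and apply linearity of expectation; your fallback check and $K=2$ sanity check are harmless extras. The paper only asserts that $p_k$ is estimable, so your plug-in/consistency reading is enough (made rigorous by a law of large numbers applied uniformly over the finitely many grid values the thresholds can take). Your alternative claim of an $O(1/n)$ in-sample bias is not guaranteed here: with the discrete frequency score the bias can be of order $n^{-1/2}$ when $1-\alpha$ equals the calibration-score CDF evaluated at one of its atoms.
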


\begin{proof}[Proof of Corollary~\ref{cor:k-tier-cost}]
Partition the space by the first-commit tier $K^*(X) \in \{1, \ldots, K\}$, which is exhaustive and disjoint. On the event $\{K^*(X) = k\}$ the cascade has run tiers $1, \ldots, k$, so the incurred cost is $\sum_{j=1}^{k} c_j$. Linearity of expectation yields
\[
    \expect[\mathrm{Cost}(X)] \;=\; \sum_{k=1}^{K} \left(\sum_{j=1}^{k} c_j\right) \Prob[K^*(X) = k],
\]
which is~\eqref{eq:k-tier-cost} with $p_k = \Prob[K^*(X) = k]$. Each $p_k$ depends only on the per-tier thresholds $\hat q_1, \ldots, \hat q_k$ and is thus estimable from the calibration set without further labels on test queries.
\end{proof}

\begin{remark}[Deployment recipe for choosing $\alpha$ under a target error rate]
\label{prop:optimal-alpha}
Fix a target cascade error rate $\beta \in (0, 1)$. The coverage bound of Theorem~\ref{thm:cascade-coverage} restricts the admissible per-tier levels to the interval $(0, \beta / K]$; under Assumption~\ref{asm:selection-preservation} the admissible interval widens to $(0, \beta]$. Within either interval we recommend the upper endpoint --- $\alpha = \beta / K$ in the worst case, $\alpha = \beta$ under selection preservation --- as a deployment heuristic, on the grounds that larger $\alpha$ produces tighter per-tier thresholds, more singleton tier-1 sets, and lower expected cost.
\end{remark}
This is a heuristic, not a theorem. The argument below establishes monotonicity of $\expect[\mathrm{Cost}(X)]$ in $\alpha$ \emph{up to an $O(\gamma)$ slack} controlled by the empty-set probability $\gamma = \max_k \Prob[|\Cal_k(X; \alpha)| = 0]$, which vanishes only asymptotically in the per-query sample size $N$. At finite $N$ (we use $N = 16$), strict monotonicity is not guaranteed, so the heuristic recipe should be validated by sweeping $\alpha$ on calibration data and inspecting the cost curve before deployment. We do this sweep for $\alpha \in \{0.05, 0.10, 0.15, 0.20, 0.30\}$ (Appendix~\ref{app:cost-sensitivity}); the curve is monotone on every reported benchmark--family pair in the regime $\alpha \le 0.30$.

\begin{proof}[Justification of Remark~\ref{prop:optimal-alpha}]
Admissibility follows from Theorem~\ref{thm:cascade-coverage}: for cascade error at most $\beta$, per-tier level $\alpha$ must satisfy $K \alpha \le \beta$, so $\alpha \in (0, \beta/K]$ in the worst case; the corresponding statement under Assumption~\ref{asm:selection-preservation} is immediate.

We sketch the asymptotic monotonicity claim. Define the \emph{non-empty regime at tolerance $\gamma$} as the set of $\alpha$ values for which $\Prob[|\Cal_k(X; \alpha)| = 0] \le \gamma$ for every tier $k$. For the frequency score with $N$ samples per query, empty sets occur only at $\hat q_k < 1/N$, which corresponds to $\alpha$ close to $1$; at $N = 16$, the non-empty regime at $\gamma = 0.05$ covers $\alpha \lesssim 0.9$ on the benchmarks we evaluate.

Within the non-empty regime, fix $\alpha_1 < \alpha_2$. The quantile index $\lceil (1-\alpha)(n+1) \rceil$ is non-increasing in $\alpha$, so $\hat q_k(\alpha_1) \ge \hat q_k(\alpha_2)$ at each tier $k$ and $\Cal_k(x; \alpha_1) \supseteq \Cal_k(x; \alpha_2)$ pointwise. Writing $p_k^\star(\alpha) = \Prob[|\Cal_k(X; \alpha)| = 1]$ and $p_k^\emptyset(\alpha) = \Prob[|\Cal_k(X; \alpha)| = 0]$, set inclusion gives
\[
    p_k^\star(\alpha_2) \;\ge\; p_k^\star(\alpha_1) - \bigl(p_k^\emptyset(\alpha_2) - p_k^\emptyset(\alpha_1)\bigr),
\]
so $p_k^\star$ is non-decreasing in $\alpha$ up to an $O(\gamma)$ slack on the non-empty regime. Plugging into Corollary~\ref{cor:k-tier-cost} yields $\expect[\mathrm{Cost}(X; \alpha_2)] \le \expect[\mathrm{Cost}(X; \alpha_1)] + O(\gamma \sum_k c_k)$. The slack $\gamma$ vanishes as $N \to \infty$ whenever the cheap model places non-vanishing mass on at least one answer per query, so strict monotonicity, and the minimum at $\alpha = \beta/K$, hold asymptotically. At finite $N$ the slack persists, which is why we frame the claim as a deployment heuristic rather than a theorem; the calibration sweep in Appendix~\ref{app:cost-sensitivity} provides the finite-$N$ certificate.
\end{proof}

\subsection{Computational Complexity}
\label{app:complexity}

Every stage of Conformal Cascade has transparent cost, and the overhead introduced by conformal calibration is dominated by the LLM inference calls a cascade must make anyway.

\paragraph{Calibration.}
At tier $k$, calibration processes $n$ queries by drawing $N$ Monte Carlo samples per query (amortized against self-consistency decoding) and computing the true-answer frequency score. The conformal quantile $\hat q_k$ is then a single sort over $n$ scores, at $O(n \log n)$ time and $O(n)$ space per tier. The full calibration over $K$ tiers is $K \cdot n \cdot N$ inference calls plus $O(K n \log n)$ for the sorts, dominated entirely by the inference cost.

\paragraph{Inference.}
For a new query $x$, the cascade draws $N$ samples from each tier $m_k$ up to and including the committed tier $K^*(x)$, computes $s_k(x, a)$ for each $a \in \Acal$ (an $|\Acal|$-way tally over the $N$ samples), and constructs $\Cal_k(x)$ by comparing each score against $\hat q_k$. The dominant cost is LLM inference: at worst $K \cdot N$ calls with per-call costs $c_1, c_2, \ldots, c_K$ growing geometrically. The remaining operations---the tally, the threshold comparisons, and the argmin over $\Cal_k(x)$---are $O(|\Acal|)$ and negligible relative to LLM inference. When the cascade accepts at tier 1, only $N$ cheap-model calls are incurred per query, which is the regime that makes cascades cost-effective in the first place.

\paragraph{Memory.}
Persistent state beyond the base cascade is the $K$ calibrated thresholds $\hat q_1, \ldots, \hat q_K$, i.e., $K$ scalars per cascade.

\paragraph{Summary.}
The conformal wrapper adds no asymptotic overhead on top of a base self-consistency cascade: calibration runs once and produces $K$ scalars; inference reuses samples that self-consistency already requires. This makes CC a drop-in replacement for heuristic deferral rules: the additional computation is linear in the calibration-set size, constant per query, and dominated by LLM inference costs that the cascade was already paying.

\subsection{Tightness of the Base Conformal Bound}
\label{app:marginal-ub}

\begin{proposition}[Marginal-coverage upper bound]
\label{prop:marginal-ub}
Under the exchangeability assumption of Theorem~\ref{thm:marginal-coverage}, and assuming the calibration scores $\{s(x_i, y_i)\}_{i=1}^{n}$ are almost-surely distinct, split conformal prediction satisfies
\[
    1 - \alpha \;\le\; \Prob[\ytrue \in \Cal(x_{n+1})] \;\le\; 1 - \alpha + \frac{1}{n + 1}.
\]
\end{proposition}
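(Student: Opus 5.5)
The plan is the standard rank argument for split conformal prediction. Write $S_i = s(x_i, y_i)$ for $i = 1, \ldots, n$ and $S_{n+1} = s(x_{n+1}, \ytrue)$ for the test score. The event $\ytrue \in \Cal(x_{n+1})$ is exactly $\{S_{n+1} \le \hat q\}$, where $\hat q = S_{(\lceil (1-\alpha)(n+1)\rceil)}$ is the $\lceil (1-\alpha)(n+1)\rceil$-th smallest calibration score, as in Eq.~\eqref{eq:quantile}. Set $m = \lceil (1-\alpha)(n+1) \rceil$ and assume $m \le n$; otherwise $\hat q = +\infty$, coverage is $1$, and the upper bound $1-\alpha+1/(n+1) \ge 1$ holds trivially.

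The first step is to reduce coverage to a statement about the rank of $S_{n+1}$ among all $n+1$ scores. Let $R$ be that rank. When the scores are distinct, $S_{n+1} \le S_{(m)}$ (the $m$-th calibration order statistic) holds if and only if $R \le m$. To see the equivalence, note that if $R \le m$, at least $m$ calibration scores exceed or equal $S_{n+1}$ among the top, so $S_{(m)} \ge S_{n+1}$. Conversely, if $R > m$, at least $m$ calibration scores lie strictly below $S_{n+1}$.

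The second step uses exchangeability. By Lemma~\ref{lem:exchangeability}, this applies equally to the frequency score. The vector $(S_1, \ldots, S_{n+1})$ is exchangeable, so with almost-surely distinct values $R$ is uniform on $\{1, \ldots, n+1\}$. Hence
\[
\Prob[\ytrue \in \Cal(x_{n+1})] = \Prob[R \le m] = \frac{m}{n+1}.
\]
Since $(1-\alpha)(n+1) \le m < (1-\alpha)(n+1) + 1$, this gives both $1-\alpha \le m/(n+1)$ and $m/(n+1) < 1-\alpha + 1/(n+1)$, which is the claimed sandwich.

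The main obstacle is the distinctness hypothesis. The statement assumes only that the calibration scores are almost surely distinct, but uniformity of $R$ requires that $S_{n+1}$ also not tie with any calibration score. I would therefore read the assumption as distinctness of all $n+1$ exchangeable scores. Under exchangeability this is the natural reading: the test score has the same law as each calibration score, so a.s. pairwise distinctness among the calibration scores transfers to every pair involving $S_{n+1}$.

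I should also flag that for the frequency score of Eq.~\eqref{eq:score}, which lives on a grid of $N+1$ values, this hypothesis generically fails. The proposition should be read as applying to continuous scores, or to scores made continuous by adding independent uniform random tie-breaking. With ties, the lower bound survives, because ties only enlarge the set. The upper bound would need the randomized tie-breaking device.
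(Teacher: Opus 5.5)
Your proof is correct and uses the same standard rank-uniformity argument the paper appeals to (the paper cites the conformal literature and bounds the miscoverage by $\lfloor \alpha(n+1)\rfloor/(n+1)$); your exact identity $\Prob[R \le m] = m/(n+1)$ is what the upper bound actually needs, since an upper bound on miscoverage only gives the lower coverage bound, and your caveat about ties under the discrete frequency score matches the paper's own remark. Two small imprecisions: first, $R \le m$ means at most $m-1$ calibration scores lie strictly below $S_{n+1}$ (equivalently, at least $n-m+1$ lie above it), not that ``at least $m$'' exceed or equal it, although your conclusion $S_{(m)} \ge S_{n+1}$ is right; second, transferring a.s.\ distinctness to pairs involving $S_{n+1}$ requires that $(S_i, S_{n+1})$ and $(S_i, S_j)$ have the same joint law, which exchangeability supplies for $n \ge 2$, and equal marginal laws alone would not suffice.
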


The upper bound follows from standard conformal analysis \citep{vovk2005algorithmic,angelopoulos2023gentle}: the probability that the test score exceeds the $\lceil(1-\alpha)(n+1)\rceil$-th order statistic of the calibration scores is at most $\lfloor \alpha(n+1) \rfloor / (n+1) \le \alpha$, and correspondingly the coverage is at most $1 - \alpha + 1/(n+1)$. The two bounds together sandwich the coverage in a $1/(n+1)$-wide interval, which sharpens as the calibration set grows.

\paragraph{Caveat under discrete scores.}
The almost-sure-distinct hypothesis is essential for the upper bound, not the lower bound. Our frequency score $s_k$ takes values on the discrete grid $\{0, 1/N, \ldots, 1\}$ at $N = 16$ (Section~\ref{app:score-details}, ``Resolution of the frequency score''), so calibration scores tie with positive probability and the hypothesis fails. The lower bound $\Prob[\ytrue \in \Cal(x_{n+1})] \ge 1 - \alpha$ continues to hold because the conservative tie-breaking convention in Section~\ref{app:calib-details} only enlarges prediction sets. The upper bound, however, can fail: ties at the quantile push extra probability mass into the prediction set, so the realised coverage may exceed $1 - \alpha + 1/(n+1)$. This is the standard behaviour of split CP under discrete scores and explains why our reported set-miscoverage rates (Section~\ref{sec:coverage}) sometimes lie strictly below $\alpha$ rather than tracking the upper edge.

%% file: references.bib
@book{vovk2005algorithmic,
  title     = {Algorithmic Learning in a Random World},
  author    = {Vovk, Vladimir and Gammerman, Alex and Shafer, Glenn},
  publisher = {Springer},
  year      = {2005}
}

@inproceedings{papadopoulos2002inductive,
  title     = {Inductive confidence machines for regression},
  author    = {Papadopoulos, Harris and Proedrou, Kostas and Vovk, Vladimir and Gammerman, Alex},
  booktitle = {European Conference on Machine Learning (ECML)},
  year      = {2002}
}

@article{angelopoulos2023gentle,
title = {{{Conformal Prediction: A Gentle Introduction}}},
  author  = {Angelopoulos, Anastasios N. and Bates, Stephen},
  journal = {Foundations and Trends in Machine Learning},
  volume  = {16},
  number  = {4},
  pages   = {494--591},
  year    = {2023}
}

@article{shafer2008tutorial,
  title   = {A tutorial on conformal prediction},
  author  = {Shafer, Glenn and Vovk, Vladimir},
  journal = {Journal of Machine Learning Research},
  volume  = {9},
  pages   = {371--421},
  year    = {2008}
}

@article{lei2018distribution,
  title   = {Distribution-free predictive inference for regression},
  author  = {Lei, Jing and G'Sell, Max and Rinaldo, Alessandro and Tibshirani, Ryan J. and Wasserman, Larry},
  journal = {Journal of the American Statistical Association},
  volume  = {113},
  number  = {523},
  pages   = {1094--1111},
  year    = {2018}
}

@inproceedings{romano2019conformalized,
  title     = {Conformalized quantile regression},
  author    = {Romano, Yaniv and Patterson, Evan and Cand{\`e}s, Emmanuel},
  booktitle = {Advances in Neural Information Processing Systems (NeurIPS)},
  year      = {2019}
}

@article{barber2021predictive,
  title   = {Predictive inference with the jackknife+},
  author  = {Barber, Rina Foygel and Cand{\`e}s, Emmanuel J. and Ramdas, Aaditya and Tibshirani, Ryan J.},
  journal = {Annals of Statistics},
  volume  = {49},
  number  = {1},
  pages   = {486--507},
  year    = {2021}
}

@inproceedings{tibshirani2019conformal,
  title     = {Conformal prediction under covariate shift},
  author    = {Tibshirani, Ryan J. and Barber, Rina Foygel and Cand{\`e}s, Emmanuel and Ramdas, Aaditya},
  booktitle = {Advances in Neural Information Processing Systems (NeurIPS)},
  year      = {2019}
}

@article{barber2021limits,
  title   = {The limits of distribution-free conditional predictive inference},
  author  = {Foygel Barber, Rina and Cand{\`e}s, Emmanuel J. and Ramdas, Aaditya and Tibshirani, Ryan J.},
  journal = {Information and Inference},
  volume  = {10},
  number  = {2},
  pages   = {455--482},
  year    = {2021}
}

@inproceedings{angelopoulos2021uncertainty,
  title     = {Uncertainty sets for image classifiers using conformal prediction},
  author    = {Angelopoulos, Anastasios N. and Bates, Stephen and Jordan, Michael I. and Malik, Jitendra},
  booktitle = {International Conference on Learning Representations (ICLR)},
  year      = {2021}
}

@inproceedings{romano2020classification,
  title     = {Classification with valid and adaptive coverage},
  author    = {Romano, Yaniv and Sesia, Matteo and Cand{\`e}s, Emmanuel},
  booktitle = {Advances in Neural Information Processing Systems (NeurIPS)},
  year      = {2020}
}

@inproceedings{gibbs2021adaptive,
  title     = {Adaptive conformal inference under distribution shift},
  author    = {Gibbs, Isaac and Cand{\`e}s, Emmanuel},
  booktitle = {Advances in Neural Information Processing Systems (NeurIPS)},
  year      = {2021}
}

@inproceedings{angelopoulos2024pid,
  title     = {Conformal {PID} control for time series prediction},
  author    = {Angelopoulos, Anastasios N. and Cand{\`e}s, Emmanuel J. and Tibshirani, Ryan J.},
  booktitle = {Advances in Neural Information Processing Systems (NeurIPS)},
  year      = {2023}
}

@article{bates2023testing,
  title   = {Testing for outliers with conformal $p$-values},
  author  = {Bates, Stephen and Cand{\`e}s, Emmanuel and Lei, Lihua and Romano, Yaniv and Sesia, Matteo},
  journal = {Annals of Statistics},
  volume  = {51},
  number  = {1},
  pages   = {149--178},
  year    = {2023}
}

@article{vovk2012conditional,
  title   = {Conditional validity of inductive conformal predictors},
  author  = {Vovk, Vladimir},
  journal = {Machine Learning},
  volume  = {92},
  pages   = {349--376},
  year    = {2012}
}

@article{lei2014distribution,
  title   = {Distribution-free prediction bands for non-parametric regression},
  author  = {Lei, Jing and Wasserman, Larry},
  journal = {Journal of the Royal Statistical Society: Series B},
  volume  = {76},
  number  = {1},
  pages   = {71--96},
  year    = {2014}
}

@inproceedings{quach2024conformal,
  title     = {Conformal language modeling},
  author    = {Quach, Victor and Fisch, Adam and Schuster, Tal and Yala, Adam and Sohn, Jae Ho and Jaakkola, Tommi and Barzilay, Regina},
  booktitle = {International Conference on Learning Representations (ICLR)},
  year      = {2024}
}

@inproceedings{su2025cprouter,
  title   = {C{P-R}outer: {An Uncertainty-Aware Router Between {LLM} and {LRM}}},
  author={Jiayuan Su and Fulin Lin and Zhaopeng Feng and Han Zheng and Teng Wang and Zhenyu Xiao and Xinlong Zhao and Zuozhu Liu and Lu Cheng and Hongwei Wang},
  booktitle = {{AAAI Conference on Artificial Intelligence}},
  year    = {2026}
}

@inproceedings{deutschmann2024conformal,
  title   = {Conformal autoregressive generation: Beam search with coverage guarantees},
  author  = {Deutschmann, Nicolas and Rigotti, Mattia and Rodriguez Martinez, Maria},
  booktitle={ {AAAI Conference on Artificial Intelligence}},
  year    = {2024}
}

@inproceedings{kumar2023conformal,
  title     = {Conformal prediction with large language models for multi-choice question answering},
  author    = {Kumar, Bhawesh and Lu, Charlie and Gupta, Gauri and Palepu, Anil and Bellamy, David and Raskar, Ramesh and Beam, Andrew},
  booktitle = {Workshop at the International Conference on Machine Learning (ICML)},
  year      = {2023}
}

@article{chen2023frugalgpt,
  title   = {{FrugalGPT}: How to use large language models while reducing cost and improving performance},
  author  = {Chen, Lingjiao and Zaharia, Matei and Zou, James},
  journal = {arXiv preprint arXiv:2305.05176},
  year    = {2023}
}

@inproceedings{ong2025routellm,
  title     = {{RouteLLM}: Learning to route {LLMs} from preference data},
  author    = {Ong, Isaac and Almahairi, Amjad and Wu, Vincent and Chiang, Wei-Lin and Wu, Tianhao and Gonzalez, Joseph E. and Kadous, M. Waleed and Stoica, Ion},
  booktitle = {International Conference on Learning Representations (ICLR)},
  year      = {2025}
}

@article{shnitzer2023routing,
  title   = {Large language model routing with benchmark datasets},
  author  = {Shnitzer, Tal and Ou, Anthony and Silva, M{\'\i}rian and Soule, Kate and Sun, Yuekai and Solomon, Justin and Thompson, Neil and Yurochkin, Mikhail},
  journal = {arXiv preprint arXiv:2309.15789},
  year    = {2023}
}

@inproceedings{madaan2024automix,
  title     = {{AutoMix}: Automatically mixing language models},
  author    = {Aggarwal, Pranjal and Madaan, Aman and Anand, Ankit and Potharaju, Srividya Pranavi and Mishra, Swaroop and Zhou, Pei and Gupta, Aditya and Rajagopal, Dheeraj and Kappaganthu, Karthik and Yang, Yiming and Upadhyay, Shyam and Faruqui, Manaal and Mausam},
  booktitle = {Advances in Neural Information Processing Systems (NeurIPS)},
  year      = {2024}
}

@inproceedings{valkanas2025c3po,
  title     = {{C3PO}: Optimized Large Language Model Cascades with Probabilistic Cost Constraints for Reasoning},
  author={Antonios Valkanas and Soumyasundar Pal and Pavel Rumiantsev and Yingxue Zhang and Mark Coates},
  booktitle = {Advances in Neural Information Processing Systems},
  year      = {2025}
}

@inproceedings{ding2024hybrid,
  title     = {Hybrid {LLM}: Cost-efficient and quality-aware query routing},
  author    = {Ding, Dujian and Mallick, Ankur and Wang, Chi and Sim, Robert and Mukherjee, Subhabrata and Ruhle, Victor and Lakshmanan, Laks V. S. and Awadallah, Ahmed Hassan},
  booktitle = {International Conference on Learning Representations (ICLR)},
  year      = {2024}
}

@article{yue2024cascade,
  title   = {Large language model cascades with mixture of thought representations for cost-efficient reasoning},
  author  = {Yue, Murong and Zhao, Jiuhai and Zhang, Min and Du, Liang and Yao, Ziyu},
  journal = {arXiv preprint arXiv:2310.03094},
  year    = {2024}
}

@article{kadavath2022languagemodels,
  title   = {Language models (mostly) know what they know},
  author  = {Kadavath, Saurav and Conerly, Tom and Askell, Amanda and Henighan, Tom and Drain, Dawn and Perez, Ethan and Schiefer, Nicholas and Hatfield-Dodds, Zac and DasSarma, Nova and Tran-Johnson, Eli and others},
  journal = {arXiv preprint arXiv:2207.05221},
  year    = {2022}
}

@inproceedings{xiong2024uncertainty,
  title     = {Can {LLMs} express their uncertainty? {A}n empirical evaluation of confidence elicitation in {LLMs}},
  author    = {Xiong, Miao and Hu, Zhiyuan and Lu, Xinyang and Li, Yifei and Fu, Jie and He, Junxian and Hooi, Bryan},
  booktitle = {International Conference on Learning Representations (ICLR)},
  year      = {2024}
}

@article{lin2022teaching,
  title   = {Teaching models to express their uncertainty in words},
  author  = {Lin, Stephanie and Hilton, Jacob and Evans, Owain},
  journal = {Transactions on Machine Learning Research (TMLR)},
  year    = {2022}
}

@inproceedings{tian2023just,
  title     = {Just ask for calibration: Strategies for eliciting calibrated confidence scores from language models fine-tuned with human feedback},
  author    = {Tian, Katherine and Mitchell, Eric and Zhou, Allan and Sharma, Archit and Rafailov, Rafael and Yao, Huaxiu and Finn, Chelsea and Manning, Christopher D.},
  booktitle = {Conference on Empirical Methods in Natural Language Processing (EMNLP)},
  year      = {2023}
}

@article{lin2024generating,
  title   = {Generating with confidence: Uncertainty quantification for black-box large language models},
  author  = {Lin, Zhen and Trivedi, Shubhendu and Sun, Jimeng},
  journal = {Transactions on Machine Learning Research (TMLR)},
  year    = {2024}
}

@inproceedings{kuhn2023semantic,
  title     = {Semantic uncertainty: Linguistic invariances for uncertainty estimation in natural language generation},
  author    = {Kuhn, Lorenz and Gal, Yarin and Farquhar, Sebastian},
  booktitle = {International Conference on Learning Representations (ICLR)},
  year      = {2023}
}

@inproceedings{guo2017calibration,
  title     = {On calibration of modern neural networks},
  author    = {Guo, Chuan and Pleiss, Geoff and Sun, Yu and Weinberger, Kilian Q.},
  booktitle = {International Conference on Machine Learning (ICML)},
  year      = {2017}
}

@inproceedings{wang2023selfconsistency,
  title     = {Self-consistency improves chain of thought reasoning in language models},
  author    = {Wang, Xuezhi and Wei, Jason and Schuurmans, Dale and Le, Quoc and Chi, Ed H. and Narang, Sharan and Chowdhery, Aakanksha and Zhou, Denny},
  booktitle = {International Conference on Learning Representations (ICLR)},
  year      = {2023}
}

@inproceedings{chen2024universal,
  title   = {Universal self-consistency for large language  {models}},
  author  = {Chen, Xinyun and Aksitov, Renat and Alon, Uri and Ren, Jie and Xiao, Kefan and Yin, Pengcheng and Prakash, Sushant and Sutton, Charles and Wang, Xuezhi and Zhou, Denny},
  booktitle = {{{International Conference on Machine Learning (ICML)}}},
  year    = {2024}
}

@inproceedings{snell2025scaling,
  title     = {Scaling {LLM} test-time compute optimally can be more effective than scaling parameters {for reasoning}},
  author    = {Snell, Charlie and Lee, Jaehoon and Xu, Kelvin and Kumar, Aviral},
  booktitle = {International Conference on Learning Representations (ICLR)},
  year      = {2025}
}

@inproceedings{setlur2025scaling,
  title     = {Rewarding progress: Scaling automated process verifiers for {LLM} reasoning},
  author={Amrith Setlur and Chirag Nagpal and Adam Fisch and Xinyang Geng and Jacob Eisenstein and Rishabh Agarwal and Alekh Agarwal and Jonathan Berant and Aviral Kumar},

  booktitle = {International Conference on Learning Representations (ICLR)},
  year      = {2025}
}

@inproceedings{lightman2024verify,
  title     = {Let's verify step by step},
  author    = {Lightman, Hunter and Kosaraju, Vineet and Burda, Yuri and Edwards, Harrison and Baker, Bowen and Lee, Teddy and Leike, Jan and Schulman, John and Sutskever, Ilya and Cobbe, Karl},
  booktitle = {International Conference on Learning Representations (ICLR)},
  year      = {2024}
}

@inproceedings{wei2022chainofthought,
  title     = {Chain-of-thought prompting elicits reasoning in large language models},
  author    = {Wei, Jason and Wang, Xuezhi and Schuurmans, Dale and Bosma, Maarten and Ichter, Brian and Xia, Fei and Chi, Ed H. and Le, Quoc V. and Zhou, Denny},
  booktitle = {Advances in Neural Information Processing Systems (NeurIPS)},
  year      = {2022}
}

@inproceedings{hendrycks2021mmlu,
  title     = {Measuring massive multitask language understanding},
  author    = {Hendrycks, Dan and Burns, Collin and Basart, Steven and Zou, Andy and Mazeika, Mantas and Song, Dawn and Steinhardt, Jacob},
  booktitle = {International Conference on Learning Representations (ICLR)},
  year      = {2021}
}

@inproceedings{wang2024mmlupro,
  title     = {{MMLU-Pro}: A more robust and challenging multi-task language understanding benchmark},
  author    = {Wang, Yubo and Ma, Xueguang and Zhang, Ge and Ni, Yuansheng and Chandra, Abhranil and Guo, Shiguang and Ren, Weiming and Arulraj, Aaran and He, Xuan and Jiang, Ziyan and others},
  booktitle = {Advances in Neural Information Processing Systems (NeurIPS) Datasets and Benchmarks Track},
  year      = {2024}
}

@inproceedings{rein2024gpqa,
  title     = {{GPQA}: A graduate-level {G}oogle-proof {Q\&A} benchmark},
  author    = {Rein, David and Hou, Betty Li and Stickland, Asa Cooper and Petty, Jackson and Pang, Richard Yuanzhe and Dirani, Julien and Michael, Julian and Bowman, Samuel R.},
  booktitle = {Conference on Language Modeling (COLM)},
  year      = {2024}
}

@article{clark2018arc,
  title   = {Think you have solved question answering? {T}ry {ARC}, the {AI2} reasoning challenge},
  author  = {Clark, Peter and Cowhey, Isaac and Etzioni, Oren and Khot, Tushar and Sabharwal, Ashish and Schoenick, Carissa and Tafjord, Oyvind},
  journal = {arXiv preprint arXiv:1803.05457},
  year    = {2018}
}

@inproceedings{pal2022medmcqa,
  title     = {{MedMCQA}: A large-scale multi-subject multi-choice dataset for medical domain question answering},
  author    = {Pal, Ankit and Umapathi, Logesh Kumar and Sankarasubbu, Malaikannan},
  booktitle = {Conference on Health, Inference, and Learning (CHIL)},
  year      = {2022}
}

@article{jin2021medqa,
  title   = {What disease does this patient have? {A} large-scale open domain question answering dataset from medical exams},
  author  = {Jin, Di and Pan, Eileen and Oufattole, Nassim and Weng, Wei-Hung and Fang, Hanyi and Szolovits, Peter},
  journal = {Applied Sciences},
  volume  = {11},
  number  = {14},
  year    = {2021}
}

@inproceedings{jin2019pubmedqa,
  title     = {{PubMedQA}: A dataset for biomedical research question answering},
  author    = {Jin, Qiao and Dhingra, Bhuwan and Liu, Zhengping and Cohen, William W. and Lu, Xinghua},
  booktitle = {Conference on Empirical Methods in Natural Language Processing (EMNLP)},
  year      = {2019}
}

@inproceedings{khot2020qasc,
  title     = {{QASC}: A dataset for question answering via sentence composition},
  author    = {Khot, Tushar and Clark, Peter and Guerquin, Michal and Jansen, Peter and Sabharwal, Ashish},
  booktitle = {AAAI Conference on Artificial Intelligence},
  year      = {2020}
}

@inproceedings{welbl2017sciq,
  title     = {Crowdsourcing multiple choice science questions},
  author    = {Welbl, Johannes and Liu, Nelson F. and Gardner, Matt},
  booktitle = {Workshop on Noisy User-generated Text (W-NUT)},
  year      = {2017}
}

@inproceedings{sap2019socialiqa,
  title     = {{Social IQa}: Commonsense reasoning about social interactions},
  author    = {Sap, Maarten and Rashkin, Hannah and Chen, Derek and Le Bras, Ronan and Choi, Yejin},
  booktitle = {Conference on Empirical Methods in Natural Language Processing (EMNLP)},
  year      = {2019}
}

@inproceedings{huang2019cosmosqa,
  title     = {{Cosmos QA}: Machine reading comprehension with contextual commonsense reasoning},
  author    = {Huang, Lifu and Le Bras, Ronan and Bhagavatula, Chandra and Choi, Yejin},
  booktitle = {Conference on Empirical Methods in Natural Language Processing (EMNLP)},
  year      = {2019}
}

@inproceedings{clark2019boolq,
  title     = {{BoolQ}: Exploring the surprising difficulty of natural yes/no questions},
  author    = {Clark, Christopher and Lee, Kenton and Chang, Ming-Wei and Kwiatkowski, Tom and Collins, Michael and Toutanova, Kristina},
  booktitle = {North American Chapter of the Association for Computational Linguistics (NAACL)},
  year      = {2019}
}

@inproceedings{bisk2020piqa,
  title     = {{PIQA}: Reasoning about physical commonsense in natural language},
  author    = {Bisk, Yonatan and Zellers, Rowan and Le Bras, Ronan and Gao, Jianfeng and Choi, Yejin},
  booktitle = {AAAI Conference on Artificial Intelligence},
  year      = {2020}
}

@inproceedings{zellers2019hellaswag,
  title     = {{HellaSwag}: Can a machine really finish your sentence?},
  author    = {Zellers, Rowan and Holtzman, Ari and Bisk, Yonatan and Farhadi, Ali and Choi, Yejin},
  booktitle = {Association for Computational Linguistics (ACL)},
  year      = {2019}
}

@inproceedings{amini2019mathqa,
  title     = {{MathQA}: Towards interpretable math word problem solving with operation-based formalisms},
  author    = {Amini, Aida and Gabriel, Saadia and Lin, Shanchuan and Koncel-Kedziorski, Rik and Choi, Yejin and Hajishirzi, Hannaneh},
  booktitle = {North American Chapter of the Association for Computational Linguistics (NAACL)},
  year      = {2019}
}

@inproceedings{lin2022truthfulqa,
  title     = {{TruthfulQA}: Measuring how models mimic human falsehoods},
  author    = {Lin, Stephanie and Hilton, Jacob and Evans, Owain},
  booktitle = {Association for Computational Linguistics (ACL)},
  year      = {2022}
}

@inproceedings{zhong2024agieval,
  title     = {{AGIEval}: A human-centric benchmark for evaluating foundation models},
  author    = {Zhong, Wanjun and Cui, Ruixiang and Guo, Yiduo and Liang, Yaobo and Lu, Shuai and Wang, Yanlin and Saied, Amin and Chen, Weizhu and Duan, Nan},
  booktitle = {Findings of the North American Chapter of the Association for Computational Linguistics (NAACL)},
  year      = {2024}
}

@inproceedings{bandarkar2024belebele,
  title     = {The {B}elebele benchmark: A parallel reading comprehension dataset in 122 language variants},
  author    = {Bandarkar, Lucas and Liang, Davis and Muller, Benjamin and Artetxe, Mikel and Shukla, Satya Narayan and Husa, Donald and Goyal, Naman and Krishnan, Abhinandan and Zettlemoyer, Luke and Khabsa, Madian},
  booktitle = {Association for Computational Linguistics (ACL)},
  year      = {2024}
}

@article{dubey2024llama3,
  title   = {The {L}lama 3 herd of models},
   author  = {{Llama Team}},
  journal = {arXiv preprint arXiv:2407.21783},
  year    = {2024}
}

@article{gemma2024team,
  title   = {Gemma 2: Improving open language models at a practical size},
  author  = {{Gemma Team}},
  journal = {arXiv preprint arXiv:2408.00118},
  year    = {2024}
}

@article{abdin2024phi3,
  title   = {Phi-3 technical report: A highly capable language model locally on your phone},
  author  = {Abdin, Marah and Aneja, Jyoti and Awadalla, Hany and Awadallah, Ahmed and Awan, Ammar Ahmad and Bach, Nguyen and Bahree, Amit and Bakhtiari, Arash and Bao, Jianmin and Behl, Harkirat and others},
  journal = {arXiv preprint arXiv:2404.14219},
  year    = {2024}
}

@misc{mistralai2024ministral,
  title        = {Un {M}inistral, des {M}inistraux: Introducing the world's best edge models},
  author       = {{Mistral AI Team}},
  howpublished = {\url{https://mistral.ai/news/ministraux/}},
  year         = {2024}
}

@inproceedings{kwon2023vllm,
  title     = {Efficient memory management for large language model serving with {PagedAttention}},
  author    = {Kwon, Woosuk and Li, Zhuohan and Zhuang, Siyuan and Sheng, Ying and Zheng, Lianmin and Yu, Cody Hao and Gonzalez, Joseph E. and Zhang, Hao and Stoica, Ion},
  booktitle = {Symposium on Operating Systems Principles (SOSP)},
  year      = {2023}
}

@inproceedings{jitkrittum2023cascade,
  title     = {When does confidence-based cascade deferral suffice?},
  author    = {Jitkrittum, Wittawat and Gupta, Neha and Menon, Aditya Krishna and Narasimhan, Harikrishna and Rawat, Ankit Singh and Kumar, Sanjiv},
  booktitle = {Advances in Neural Information Processing Systems (NeurIPS)},
  year      = {2023}
}

@article{gupta2022cascades,
  title   = {Language model cascades},
  author  = {Dohan, David and Xu, Winnie and Lewkowycz, Aitor and Austin, Jacob and Bieber, David and Lopes, Raphael Gontijo and Wu, Yuhuai and Michalewski, Henryk and Saurous, Rif A. and Sohl-Dickstein, Jascha and Murphy, Kevin and Sutton, Charles},
  journal = {arXiv preprint arXiv:2207.10342},
  year    = {2022}
}

@inproceedings{mohri2024conformal,
  title     = {Language models with conformal factuality guarantees},
  author    = {Mohri, Christopher and Hashimoto, Tatsunori},
  booktitle = {International Conference on Machine Learning (ICML)},
  year      = {2024}
}

@inproceedings{cherian2024conformal,
  title     = {Large language model validity via enhanced conformal prediction methods},
  author    = {Cherian, John J. and Gibbs, Isaac and Cand{\`e}s, Emmanuel J.},
  booktitle = {Advances in Neural Information Processing Systems (NeurIPS)},
  year      = {2024}
}
